\newtheorem{problem}{Problem}
\begin{document}

\title{Simplification of Forest Classifiers and Regressors by Sharing Branching Conditions\thanks{The preliminary version of this paper appeared in ECMLPKDD2019 \citep{NS2019}.}}

\author{\name Atsuyoshi Nakamura\thanks{Corresponding Author.} \email atsu@ist.hokudai.ac.jp 
       \AND
       \name Kento Sakurada \\
       \addr Graduate School of Information Science and Technology\\
        Hokkaido University\\
        Kita 14, Nishi 9, Kita-ku, Sapporo, Hokkaido, 060-0814, Japan}

\editor{}

\maketitle

\begin{abstract}
  We study the problem of sharing as many branching conditions of a given forest classifier or regressor as possible while keeping classification performance. As a constraint for preventing from accuracy degradation, we first consider the one that
  the decision paths of all the given feature vectors must not change. For a branching condition that a value of a certain feature is at most a given threshold, the set of values satisfying such constraint can be represented as an interval. Thus, the problem is reduced to the problem of finding the minimum set intersecting all the constraint-satisfying intervals for each set of branching conditions on the same feature. We propose an algorithm for the original problem using an algorithm solving this problem efficiently. The constraint is relaxed later to promote further sharing of branching conditions by allowing decision path change of a certain ratio of the given feature vectors or allowing a certain number of non-intersected constraint-satisfying intervals. We also extended our algorithm for both the relaxations. The effectiveness of our method is demonstrated through comprehensive experiments using 21 datasets (13 classification and 8 regression datasets in UCI machine learning repository) and 4 classifiers/regressors (random forest, extremely randomized trees, AdaBoost and gradient boosting).
\end{abstract}

\begin{keywords}
simplification, tree ensemble, optimization, braching condition sharing, forest classifiers and regressors
\end{keywords}

\section{Introduction}

Edge computing is a key for realization of real time and low loading intelligent data processing.  
Highly accurate compact classifiers or regressors are necessary for edge computing devices.
To produce compact models, researches by various approaches have been done \citep{MMHKAH2021} including
directly learning compact models \citep{Gupta2017,KGV2017} and post processing like quantization and compression.

Ensemble classifiers and regressors are very popular predictors due to their good accuracy and robustness.
The most frequently used ensembles are tree ensembles that include 
random forests \citep{Breiman2001}, extremely randomized trees \citep{GEW2006}, AdaBoost \citep{FREUND1997119}
gradient boosted regression trees \citep{Friedman00}, and XGBoost \citep{CG2016}.

In this paper, we study simplification of tree ensembles for post-processing compression.
As such simplification method, pruning \citep{RF-pruning} has been studied for more than three decades.
Most existing pruning techniques reduce the number of branching nodes or trees.
We, however, consider post-compression methods that simplify a given tree ensemble by sharing branching conditions without changing the structure of the tree ensemble.
Branching condition sharing is corresponding to comparator sharing which contributes to compact hardware implementation of tree ensemble predictors \citep{JSN2018,ISNMT2020}.
Though comparator sharing by threshold clustering has been already studied by \citet{JSN2018} before our research,
we pursue a more sophisticated method than simply integrating similar threshold values.

Firstly, we formalize our simplification problem as the problem of minimizing the number of distinct branching conditions in a tree ensemble by condition sharing under the constraint that any decision path of the given feature vectors must not be changed.
We consider the case that all the features are numerical and every branching condition is
the condition that the value of a certain feature is at most a certain threshold $\theta$. 
Then, the range of $\theta$ that satisfies the above constraint becomes some interval $[\ell,u)$,
and the problem can be reduced to each feature's problem of obtaining a minimum set that intersects all the range intervals of its thresholds. We propose Algorithm Min\_IntSet for this reduced problem and prove its correctness.   
We also develop Algorithm Min\_DBN for our original problem using Min\_IntSet to solve the reduced problem for each feature.

To promote further sharing of branching conditions, in addition to simplely using of bootstrap training samples per tree, we propose two extended problems by relaxation of the restriction.
One is relaxation by allowing the existence of feature vectors whose decision paths change.
We introduce a parameter $\sigma$ called ``path changeable rate'' and allow $100\sigma$\% feature vectors passing through each node
to change. The other is relaxation by allowing the existence of non-intersecting intervals in the reduced problem.
We introduce a parameter $c$, the number of allowable exceptions, and allow at most $c$ intervals that are not intersected by the solution set in the reduced problem. For both extended problems, we provide solution algorithms and their time and space complexities.

The effectiveness of our algorithm Min\_DBN and its extensions were demonstrated using 13 classification and 8 regression datasets in the UCI machine learning repository \citep{Dua:2017}, and four major tree ensembles, random forest (RF), extremely randomized trees (ERT), AdaBoost (ABoost) and gradient boosting (GBoost), whose implementations are provided by scikit-learn python library \citep{scikit-learn}.
For each dataset, we conducted a 5-fold cross validation,
and four folds were used to generate the tree ensemble classifiers and regressors, to which our simplification algorithms
were applied. The remaining one fold was used to check accuracy degradation.
Medians of the reduced size ratios among all the 21 datasets are 18.1\%, 2.1\%, 45.9\%, and 85.9\% for RF, ERT, ABoost, and GBoost, respectively,
while all the accuracy degradations are within 1\% except one dataset for ERT.
Among our two extensions, the method allowing exceptional intervals performs better than the method allowing exceptional feature vectors
in terms of size ratio and accuracy degradation, though the computational cost of the former method is heavier than that of the latter method.
As for the comparison with the k-means clustering-based method, the method allowing exceptional intervals found more accurate pareto optimal predictors in our experiment using systematically selected parameters.

\subsection{Related Work}

Pruning of tree classifiers and regressors is a very popular technique as a countermeasure of overfitting to training data
\citep{Esposito1997}. Pruning is a kind of simplification of tree classifiers and regressors which reduces the number of nodes
by replacing subtrees with leaves. Various pruning methods have been developed around 30 years ago \citep{Quinlan87,NB87,BreiFrieStonOlsh84,CB91,Minger89,BB94}. 
In the pruning of tree ensembles,
in addition to directly applying pruning methods for a tree to each component tree \citep{WA01},
node reduction considering prediction and accuracy of the whole ensemble classifier was developed;
node importance-based pruning \citep{JWG17} and the pruning using 0-1 integer programming treating both error and computational cost \citep{NWJS16}.
The reduction of component trees was also realized by maximizing the diversity of component trees \citep{ZBS06} or minimizing the sum of voting integer weights for component trees without changing predictions of given feature vectors \citep{MaekawaNK20}.
The minimum single trees that are equivalent to given tree ensembles were reported to become smaller than the original tree ensembles
without accuracy degradation by applying pruning that is based on given feature vectors to the equivalent single trees \citep{VidalS20}.

Any simplification method for tree ensembles simplifies the input space partition of a given tree ensemble.
As a direct simplification of the partition, reduction of the partitioned regions was done by the Bayesian model selection approach \citep{HH18}. Branching condition sharing is one of the indirect simplifications of input space partition.
Threshold clustering \citep{JSN2018} is a simple way to do the task.
We propose a more sophisticated way that is suitable to find decision boundaries without accuracy degradation.  

\section{Problem Setting}

Let $X$ be $d$-dimensional real feature space $\mathbb{R}^d$ and let $Y$ be a categorical space $C=\{1,2,\dots, \ell\}$ or a real-valued space $\mathbb{R}$.
A \emph{decision tree} is a rooted tree in which a branching condition is attached to each internal node
and a value in $Y$ is assigned to each leaf node.
A decision tree represents a function from $X$ to $Y$ whose value $y\in Y$ for $\mathbf{x}\in X$ is the value assigned to the leaf node
that can be reached by staring from the root node and choosing the left or right child node at each internal node depending on the branching condition attached to the node. 
Here, we assume that each internal node has just two child nodes and the attached branching condition is in the form of $x_i\leq \theta$, where $x_i$ is the $i$th dimensional value of vector $\mathbf{x}\in X=\mathbb{R}^d$.
For a given feature vector $\mathbf{x}$, the left child node is chosen at each internal node if its branching condition $x_i\leq \theta$ is satisfied
in the function-value assignment process using a decision tree. Otherwise, the right child node is chosen.
The \emph{$\mathbf{x}$'s path in a decision tree $T$} is the path from the root node to the reached leaf node in the function-value assignment process. 
We let a pair $(i,\theta)$ of a feature id $i$ and a threshold $\theta$ denote the branching condition $x_i\leq \theta$.
A majority voting classifier of decision trees for a categorical space $Y$ is called a \emph{decision forest classifier},
and an average regressor of decision trees for a real-valued space $Y$ is called a \emph{decision forest regressor}.

We consider the following problem of minimizing the number of distinct branching conditions (NDC).

\begin{problem}[NDC minimization problem]\label{prob:DBC}
  For a given tree ensemble $\{ T_{1},...T_{m} \}$, a given set of feature vectors $\{\mathbf{x}_{1},...,\mathbf{x}_{n} \}$, minimize the number of distinct branching conditions $(i,\theta)$ by changing the values of some $\theta$ without changing feature vectors' paths passing through each internal node of each decision tree $T_j$ $(j=1,\dots,m)$.  
\end{problem}

\begin{figure}[tb]
\centering
\includegraphics[width=1.0\linewidth]{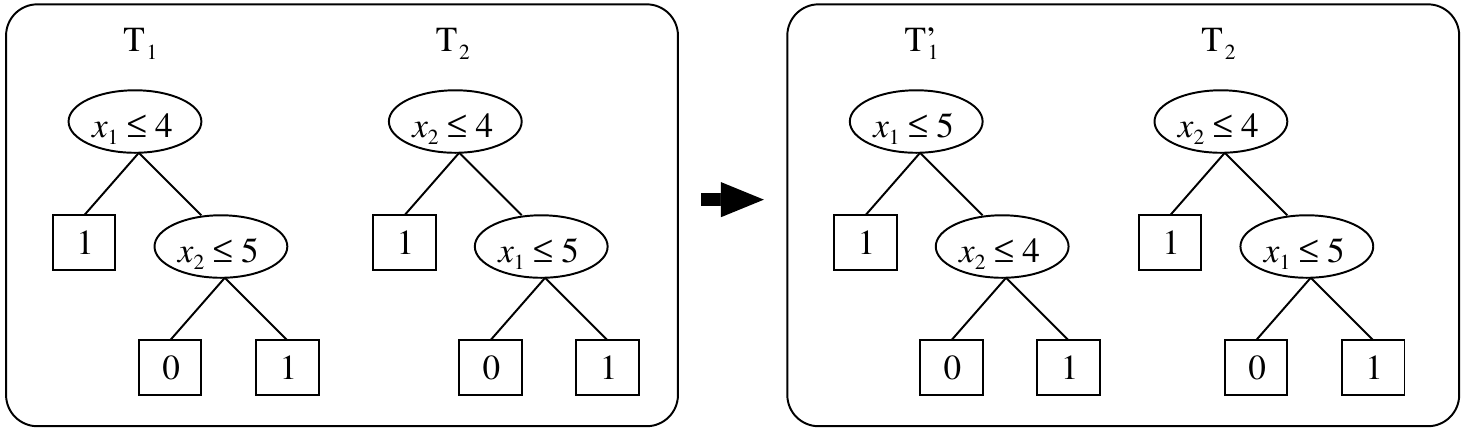}
\caption{The number of distinct branching conditions $(1,4), (1,5), (2,4), (2,5)$ in decision forest $\{T_1,T_2\}$
  can be reduced to $(1,5), (2,4)$ by changing conditions $(1,4)$ and $(2,5)$ in $T_1$ to $(1,5)$ and $(2,4)$, respectively,
  without changing the path of any feature vector in $\{(1,1),(2,7),(7,2),(8,8)\}$.}
\label{fig:example}
\end{figure}

\begin{example}
Consider Problem~\ref{prob:DBC} for a decision forest $\{T_1,T_2\}$ in Figure~\ref{fig:example} and a feature vector set $\{(1,1),(2,7),(7,2),(8,8)\}$. The Distinct branching conditions $(i,\theta)$ in decision forest $\{T_1,T_2\}$ are the following four:
\begin{equation*}
(i,\theta) =(1,4),(1,5),(2,4),(2,5).
\end{equation*}
The branching conditions $(1,4)$ and $(2,5)$ in $T_1$ can be changed to $(1,5)$ and $(2,4)$, respectively, 
without changing the path of any feature vector in the given set $\{(1,1),(2,7),(7,2),(8,8)\}$.
The decision tree $T'_1$ in Figure~\ref{fig:example} is the one that is made from $T_1$ by this branching-condition change.
Decision forest $\{T'_1,T_2\}$ has two distinct conditions $(1,5)$ and $(2,4)$, and the forest is a solution to Problem~\ref{prob:DBC}. 
\end{example}

\begin{remark}
  Decision forest $\{T_1,T_2\}$ in Figure~\ref{fig:example} can be outputted by a decision forest learner with training samples $((x_1,x_2),y)=((1,1),1),((2,7),0),((7,2),0),((8,8),1)$.
  Assume that two sets of bootstrap samples are $D_1=\{((1,1),1),((7,2),0),((8,8),1)\}$ and $D_2=\{((1,1),1),$ $((2,7),0),((8,8),1)\}$,
  and all the features are sampled for both the sets.
  In the implementation that the middle points of adjacent feature values are used as threshold candidates for branching conditions,
  CART algorithm can output $T_1$ for $D_1$ and $T_2$ for $D_2$.
  Decision tree $T'_1$ in Figure~\ref{fig:example} has the same Gini Impurity as $T_1$  at each corresponding branching node for the set of samples $D_1$.
\end{remark}

\section{Problem of Minimum Set Intersecting All the Given Intervals}

At each branching node with condition $(i,\theta)$,
the range in which $\theta$ can take a value without changing given feature vectors' paths passing through the node,
becomes interval $[\ell,u)$.
Thus, the problem of minimizing the number of distinct branching conditions in a decision forest can be solved by 
finding clusters of conditions $(i,\theta)$ whose changeable intervals have a common value for each feature $i$.
Thus, solving Problem~\ref{prob:DBC} can be reduced to solving the following problem for each feature $i$. 

\begin{problem}[Minimum intersecting set problem]\label{prob:minintset}
For a given set of intervals $\{[\ell_{1},u_{1}),$ $\dots,[\ell_{p},u_{p}) \}$, find a minimum set that intersects all the intervals $[\ell_j,u_j)$ ($j=1,\dots,p$).
\end{problem}

\begin{algorithm}[tbh]
\caption{Min\_IntSet}\label{Min_IntSet}
\begin{algorithmic}[1]
\REQUIRE $\{ [\ell_{i},u_{i}) | i \in I  \}$ : Non-empty set of intervals
\ENSURE \begin{minipage}[t]{12cm}
$ \{ s_{1},...,s_{k} \}$ : Minimum set satisfying $\{ s_{1},\dots,s_{k} \}\cap[\ell_{i},u_{i}) \neq \emptyset$ $(i\in I)$\\
$\{ I_{1},..I_{k} \}$ : $I_{j}=\{i \in I | s_{j} \in [\ell_i,u_{i})\} \  (j=1,...,k)$
\end{minipage}
\STATE $[\ell_{i_1},u_{i_1}),...,[\ell_{i_p},u_{i_p})\leftarrow$ list sorted by the values of $\ell_i$ in ascending order.
\STATE $k \leftarrow 1, t_1 \leftarrow u_{i_1}, b_1\leftarrow 1$
\FOR {$j=2$ to $p$}
\IF {$\ell_{i_j} \geq t_k$}
\STATE $s_k \leftarrow { \frac{ \ell_{i_{j-1}}+t_k }{2} }$
\STATE $I_{k} \leftarrow \{ i_{b_k},..., i_{j-1} \}$
\STATE $k \leftarrow k+1, t_k \leftarrow u_{i_j}, b_k \leftarrow j$
\ELSIF{ $u_{i_j} < t_k$}
\STATE $t_k \leftarrow u_{i_j}$
\ENDIF
\ENDFOR
\STATE $s_{k} \leftarrow { \frac{\ell_{i_p}+t_k }{2} }, I_{k}  \leftarrow \{i_{b_k},...,i_{p} \} $
\RETURN $\{ s_{1},...,s_{k} \}, \{ I_{1},..,I_{k} \} $
\end{algorithmic}
\end{algorithm}

We propose Min\_IntSet (Algorithm \ref{Min_IntSet}) as an algorithm for Problem~\ref{prob:minintset}.
The algorithm is very simple.
First, it sorts the given set of intervals $\{[\ell_1,u_1),\dots,[\ell_p,u_p)\}$ by lower bound $\ell_i$ in ascending order (Line 1).
    For the obtained sorted list $([\ell_{i_1},u_{i_1}),\dots,[\ell_{i_p},u_{i_p}))$,
starting from $k=1$ and $b_1=1$, the algorithm finds the $k$th point $s_k$ by calculating 
the maximal prefix $([\ell_{i_{b_k}},u_{i_{b_k}}),\dots,[\ell_{i_{j-1}},u_{i_{j-1}}))$ of the list $([\ell_{i_{b_k}},u_{i_{b_k}}),\dots,[\ell_{i_p},u_{i_p}))$ that contains non-empty intersection
        \[
\bigcap_{h=b_k}^{j-1}[\ell_{i_h},u_{i_h})=[\ell_{i_{j-1}},\min_{h=b_k,\dots,j-1} u_{i_h}),
        \]
        and $t_k$ is updated such that $t_k=\min_{h=b_k,\dots,j-1} u_{i_h}$ holds (Line 9).
        The algorithm can know the maximality of the prefix $([\ell_{i_{b_k}},u_{i_{b_k}}),\dots,[\ell_{i_{j-1}},u_{i_{j-1}}))$ by
        checking the condition $\ell_{i_j}\geq t_k$ which means that the intersection $[\ell_{i_j},t_k)$ is empty (Line 4).
          After finding the maximal prefix with non-empty intersection $[\ell_{i_{j-1}},t_k)$,
            the middle point of the interval is set to $s_k$ (Line 5), and repeat the same procedure for the updated $k$ and $b_k$ (Line 8).
          
The following theorem holds for Algorithm Min\_IntSet.

\begin{theorem}
  For a given set of intervals $\{[\ell_{1},u_{1}),...,[\ell_{p},u_{p}) \}$,
the set $\{s_{1},...,s_{k}\}$ outputted by Algorithm Min\_IntSet is a minimum set that intersects all the intervals $[\ell_j,u_j)$.
\end{theorem}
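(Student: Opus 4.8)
The plan is to prove the statement in two halves: \emph{feasibility} --- the returned set $\{s_1,\dots,s_k\}$ meets every input interval --- and \emph{minimality} --- no set of fewer than $k$ points can meet all of them. Both will follow from a clean description of the blocks of indices that the greedy scan produces.

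First I would set up notation. Running over the list $([\ell_{i_1},u_{i_1}),\dots,[\ell_{i_p},u_{i_p}))$ sorted by left endpoint, the resets of $b_k$ (Lines~2 and~7) cut $\{i_1,\dots,i_p\}$ into consecutive blocks $B_1,\dots,B_k$, with $B_m=\{i_{b_m},\dots,i_{e_m}\}$, $e_m=b_{m+1}-1$ for $m<k$ and $e_k=p$; these blocks partition the index set. The loop invariant I would establish is that when block $B_m$ is finalized (in Line~6 or in Line~12), the variable $t_m$ equals $\min_{h=b_m}^{e_m}u_{i_h}$ and $\ell_{i_{e_m}}<t_m$, so that, using that left endpoints are sorted,
\[
\bigcap_{h=b_m}^{e_m}[\ell_{i_h},u_{i_h}) = [\ell_{i_{e_m}},\, t_m) \neq \emptyset .
\]
This invariant is maintained because a new block is opened exactly when $\ell_{i_j}\ge t_k$ (Line~4), i.e. precisely when adjoining $[\ell_{i_j},u_{i_j})$ would empty the running intersection; otherwise $\ell_{i_j}<t_k$, and since also $\ell_{i_j}<u_{i_j}$ we get $\ell_{i_j}<\min(t_k,u_{i_j})$, so after the possible update in Line~9 the intersection is still non-empty. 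Feasibility is then immediate: $s_m=(\ell_{i_{e_m}}+t_m)/2$ satisfies $\ell_{i_h}\le\ell_{i_{e_m}}<s_m<t_m\le u_{i_h}$ for every $h\in[b_m,e_m]$, so $s_m$ lies in every interval of $B_m$; as the blocks cover all indices, every input interval is met.

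The substantive part is minimality, and I would prove it by exhibiting $k$ pairwise disjoint input intervals: since one point meets at most one interval of a pairwise-disjoint family, any feasible set then has at least $k$ points. For each block $B_m$ pick $J_m=[\ell_{i_{h_m}},u_{i_{h_m}})$ where $h_m\in[b_m,e_m]$ attains $u_{i_{h_m}}=t_m$ (the smallest right endpoint in the block). For $m<m'$ I would then chain: $\ell_{i_{h_{m'}}}\ge \ell_{i_{b_{m'}}}\ge \ell_{i_{e_m+1}}\ge t_m=u_{i_{h_m}}$, where the first step uses $h_{m'}\ge b_{m'}$ and sortedness, the second uses $b_{m'}\ge b_{m+1}=e_m+1$ and sortedness, and the third is exactly the Line~4 test that fired at $j=e_m+1$ to close $B_m$ (valid since $m<k$). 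Hence the left endpoint of $J_{m'}$ is at least the right endpoint of $J_m$, so $J_m\cap J_{m'}=\emptyset$. Thus $J_1,\dots,J_k$ are pairwise disjoint and non-empty, every feasible set has size $\ge k$, and since $\{s_1,\dots,s_k\}$ is feasible and --- lying in disjoint intervals --- has exactly $k$ elements, it is a minimum intersecting set.

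The step I expect to be the main obstacle is the disjointness claim in the minimality direction: one must choose the right witnesses (the minimum-right-endpoint interval of each block) and verify they are pairwise disjoint, not merely consecutively so. This is where sortedness of the left endpoints and the precise meaning of the Line~4 comparison jointly do the work. The feasibility half is a routine loop-invariant argument; the only delicate point there is that the intervals are half-open, which is exactly why the algorithm outputs the \emph{midpoint} of $[\ell_{i_{e_m}},t_m)$ rather than an endpoint.
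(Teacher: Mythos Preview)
Your proof is correct, but it follows a different route from the paper's. The paper proceeds by induction on the number of intervals $p$: it argues that when the Line~4 condition first fires, the running intersection $[\ell_{i_{j-1}},t_1)$ is disjoint from all remaining intervals, so any feasible set must contain a point below $t_1$; choosing $s_1$ in this intersection leaves the smallest possible suffix, and the inductive hypothesis applied to $\{[\ell_{i_j},u_{i_j}),\dots,[\ell_{i_p},u_{i_p})\}$ finishes the argument. Your proof instead gives a direct, non-inductive argument: a loop invariant for feasibility, and then a \emph{certificate of optimality} in the form of $k$ pairwise disjoint input intervals (the minimum-right-endpoint interval from each block), which immediately forces any stabbing set to have size at least $k$. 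Your approach is the standard one for interval-stabbing greedy algorithms and is arguably cleaner: it yields an explicit witness of the lower bound and, as you note, simultaneously shows that the $s_m$ are distinct. The paper's inductive argument tracks the algorithm's control flow more closely but does not produce such a certificate and contains minor imprecisions (e.g., ``at most $t_1$'' where ``less than $t_1$'' is meant for half-open intervals).
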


\begin{proof}
  We prove the theorem by mathematical induction in the number of intervals $p$.
  For $p=1$, for-sentence between Line 3 and 11 is not executed.
  At Line 12, $s_1$ is set as
\begin{equation*}
s_1 = \frac{\ell_{i_1}+u_{i_1}}{2} 
\end{equation*}
because $t_1=u_{i_1}$, and at Line 13 Min\_IntSet outputs $\{s_1\}$, which is trivially a minimum set that intersects all the interval in the given set $\{[\ell_1,u_1)\}$.

Consider the case with $p=k+1$.
When if-sentence at Line 4 never holds, $\ell_{i_j}\leq \ell_{i_p}< t_1$ holds for all $j=1,\dots,p$
and Line 8-9 ensures $t_1\leq u_{i_j}$ for all $j=1,\dots,p$.
Thus, $[\ell_{i_p},t_1)$ is contained all the intervals and the set that is composed of its middle point $s_1$ only is
trivially a minimum set that intersects all the intervals in the given set $\{[\ell_1,u_1),\dots,[\ell_p,u_p)\}$.

 When if-sentence at Line 4 holds at least once, $s_1$ is set as
\begin{equation*}
s_1 = \frac{ \ell_{ i_{j-1} }+ t_1 }{2},
\end{equation*}
and the rest for-loop is executed for $j$ from $j+1$ to $n$ given $k=2$, $t_2=u_{i_j}$,\ and $b_2=j$.
It is easy to check that $s_2,\dots,s_k$ calculated in the rest part are the same as those outputted by
\begin{equation*}
  \text{Min\_IntSet}( \{ [\ell_{i_j},u_{i_j} ),...,[\ell_{i_p},u_{i_p} )  \} ).
\end{equation*}
The condition of if-sentence at Line 4 ensures that the intersection $[\ell_{i_{j-1}},t_1)$ of $[ \ell_{i_1}, u_{i_1} ),\dots,$ $[ \ell_{i_{j-1}}, u_{i_{j-1}} )$
does not intersect the rest intervals $[\ell_{i_j},u_{i_j} ),\dots,[\ell_{i_p},u_{i_p} )$.
Thus, any minimum set that intersects all the intervals must contain at least one value that is at most $t_1$,
and any value $s_1$ in $[\ell_{i_{j-1}},t_1)$ can minimize the set of the rest intervals that does not contain $s_1$.
Since $s_1$ is set to the middle point of $[\ell_{i_{j-1}},t_1)$ in Min\_IntSet, the rest set of intervals is minimized.
    The set of the rest points $s_2,\dots,s_k$ calculated by Min\_IntSet is the same as the set outputted by $Min\_IntSet( \{ [\ell_{i_j},u_{i_j} ),...,[\ell_{i_p},u_{i_p} )  \})$,
        so the minimum set intersecting all the intervals in $\{ [\ell_{i_j},u_{i_j} ),...,[\ell_{i_p},u_{i_p} )  \}$ can be obtained using inductive assumption.
            Thus, Min\_IntSet outputs a minimum set that intersects all the given intervals in the case with $p=k+1$.
\end{proof}

The time complexity of Algorithm Min\_IntSet is $O(p\log p)$ for the number of intervals $p$ due to the bottleneck of sorting.
Its space complexity is trivially $O(p)$.

\section{Algorithm for Minimizing the Number of Distinct Branching Conditions}

\begin{algorithm}[tbh]
	\caption{Min\_DBN}\label{Min_DBN}
\begin{algorithmic}[1]
  \REQUIRE
  \begin{minipage}[t]{10cm}
  $\{\mathbf{x}_{1},...,\mathbf{x}_{n}\}$ : Set of feature vectors\\
    $\{T_{1},...,T_{m}\}$ : decision forest\\
    $\sigma$ : path-changeable rate ($0\leq \sigma<1$)
    \end{minipage}
\STATE $L_{i} \leftarrow \emptyset$ for $i=1,\dots,d$
\FOR {$j=1$ to $m$}
\FOR{ \textbf{each} branching node $N_{j,h}$ in $T_j$} 
\STATE $(i,\theta) \leftarrow$ branching condition attached to $N_{j,h}$
\STATE $[\ell_{j,h},u_{j,h}) \leftarrow$ \begin{minipage}[t]{10cm}the range of values that $\theta$ can take without changing\\ the paths of $\mathbf{x}_{1},..,\mathbf{x}_{n}$ passing through $N_{j,h}$ in $T_j$\end{minipage}\label{alg:make-interval}
\STATE $L_i\leftarrow L_i\cup \{[\ell_{j,h},u_{j,h})\}$
  \ENDFOR
  \ENDFOR
\FOR {$i =1$ to $d$}
   \STATE $\{s_{1},..,s_{k}\},\{I_{1},...,I_{k}\} \leftarrow \text{Min\_IntSet}(L_{i})$
 \FOR {$g=1$ to $k$}
\FOR { \textbf{each} $(j,h) \in I_{g}$}
\STATE Replace the branching condition $(i,\theta)$ attached to node $N_{j,h}$ with $(i,s_g)$. 
\ENDFOR\ENDFOR
\ENDFOR
\end{algorithmic}
\end{algorithm}

Min\_DBN (Algorithm~\ref{Min_DBN}) is an algorithm for the problem of minimizing the number of distinct branching conditions in a decision forest.
The algorithm uses Algorithm Min\_IntSet for each feature $i=1,\dots,d$ to find a minimum set of branching thresholds that can share the same value without changing the paths of given feature vectors passing through each node
of each tree in a given decision forest.

For the branching condition $(i,\theta)$ attached to each branching node $N_{j,h}$ of decision tree $T_j$ in a given decision forest $\{T_1,\dots,T_m\}$,
Algorithm Min\_DBN calculates the range of values $[\ell_{j,h},u_{j,h})$ that $\theta$ can take without changing the paths of given feature vectors
$\mathbf{x}_1,\dots,\mathbf{x}_n$ passing through $N_{j,h}$ in $T_j$ (Line 2-8), and adds the range (interval) to $L_i$, which is initially set to $\emptyset$ (Line 1). 
  Then, by running Min\_IntSet for each $L_i$ ($i=1,\dots,d$), Min\_DBN obtains its output $\{s_1,...,s_k\}$ (Line 10),
  and the branching condition $(i,\theta)$ of node $N_{j,h}$ with $s_g\in [\ell_{j,h},u_{j,h})$ is replaced with $(i,s_g)$ (Line 11-15).

Let us analyze the time and space complexities of Min\_DBN.
Let $N$ denote the number of branching nodes in a given decision forest.
For each branching node $N_{j,h}$, Min\_DBN needs $O(n)$ time for calculating $\ell_{j,h}$ and $u_{j,h}$.
Min\_IntSet($L_i$) for all $i=1,\dots,d$ totally consumes at most $O(N\log N)$ time.
Considering that $O(d)$ time is needed additionally, time complexity of Min\_DBN is $O(N(n + \log N)+d)$.
The space complexity of Min\_DBN is $O(dn+N)$ because space linear in the sizes of given feature vectors and decision forest is enough to run Min\_DBN.

\section{Further Sharing of Branching Conditions}

There is a possibility that our proposed algorithm cannot reduce the number of distinct branching conditions enough for some target applications.
As modifications for such cases, we propose the following two methods.

\subsection{Using the Bootstrap Training Samples per Tree}

For any given feature vector, the decision path in any component tree must not be changed in Problem~\ref{prob:DBC}. 
This constraint becomes weak if given feature vectors are reduced, which may make the smaller NDC for the problem possible. Each tree in a random forest and an extremely randomized tree is learned using a different bootstrap sample, thus training data for each component tree is different. Considering that branching conditions in each tree are decided depending on the given bootstrap sample,
it is natural to use the bootstrap training sample for each tree as given feature vectors in our NDC minimization problem.
Therefore, the following problem modification is expected to be effective for promoting branching condition sharing by giving the bootstrap training samples per tree.

\begin{problem}[NDC minimization problem with feature vectors per tree]\label{prob:DBCptfv}
  For a given tree ensemble $\{ T_{1},...T_{m} \}$, given family of feature vector sets $\{\mathbf{x}^1_{1},...,\mathbf{x}^1_{n_1} \},\dots,\{\mathbf{x}^m_{1},...,\mathbf{x}^m_{n_m} \}$, minimize the number of distinct branching conditions $(i,\theta)$ by changing the values of some $\theta$ without changing feature vector $\mathbf{x}^j_k$'s paths passing through each internal node of each decision tree $T_j$ for $k=1,\dots,n_j$ and $j=1,\dots,m$.  
\end{problem}

\subsection{Allowing Changed Decision Path}

In Problem~\ref{prob:DBC}, branching condition sharing must be done without changing any given feature vector's path.
By loosening this condition and allowing small ratio path changes of feature vectors, each interval in which the threshold can take values
is widened, as a result, further reduction of the number of distinct branching conditions can be realized.
Problem~\ref{prob:DBC} is extended to the following problem.

\begin{problem}[Path-changeable-rate version of NDC minimization problem]
  For a given decision forest $\{ T_{1},...T_{m} \}$, a given set of feature vectors $\{\mathbf{x}_{1},...,\mathbf{x}_{n} \}$ and a given path-changeable rate $0\leq \sigma<1$, minimize the number of distinct branching conditions $(i,\theta)$ by changing the values of some $\theta$ without changing more than $100\sigma$\% of feature vectors' paths passing through each node of each decision tree $T_j$ $(j=1,\dots,m)$.  
\end{problem}

This extension can be easily incorporated into Algorithm Min\_DBN by changing Line~\ref{alg:make-interval} as
\begin{description}
\item[5':] $[\ell_{j,h},u_{j,h}) \leftarrow$ \begin{minipage}[t]{10cm}the range of values that $\theta$ can take without changing\\ more than $100\sigma$\% of paths of $\mathbf{x}_{1},..,\mathbf{x}_{n}$ passing through $N_{j,h}$ in $T_j$\end{minipage}
\end{description}

      Note that, for node $N_{j,h}$ with branching condition $(i,\theta)$ in decision tree $T_j$, the interval $[\ell_{j,h},u_{j,h})$ in which threshold $\theta$
      can take a value without changing more than $100\sigma$\% of paths of feature vectors $\mathbf{x}_1,\dots,\mathbf{x}_n$ passing through $N_{j,h}$ in $T_j$, is expressed as
\begin{eqnarray*}
  \ell_{j,h} & = & \inf_{\ell} \{\ell\mid |\{\mathbf{x}_f\in X_{j,h}\mid \ell < x_{f,i}\leq \theta\}|\leq \sigma |X_{j,h}|\} \text{ and }\\
  u_{j,h} & = & \sup_{u} \{u\mid |\{\mathbf{x}_f\in X_{j,h}\mid \theta < x_{f,i}\leq u\}|\leq \sigma |X_{j,h}|\},
\end{eqnarray*}
where 
\begin{equation*}
  X_{j,h}=\{\mathbf{x}_f| \text{The path of } \mathbf{x}_f \text{ in } T_j \text{ passes through node } N_{j,h}\},
\end{equation*}
and $|S|$ for set $S$ denotes the number of elements in $S$.

For each branching node $N_{j,h}$, the extended Min\_DBN needs $O(|X_{j,h}|(\log (\sigma |X_{j,h}|+1)+1))\leq O(n(\log (\sigma n+1)+1))$ time for calculating $\ell_{j,h}$ and $u_{j,h}$
using size-$(\sigma |X_{j,h}|+1)$ heap.
Thus, time complexity of the extended version is $O(N(n(\log (\sigma n+1)+1) + \log N)+d)$.
Space complexity of the extended Min\_DBN is $O(dn+N)$, which is the same as that of the original Min\_DBN.
  
  \subsection{Allowing Non-Intersecting Intervals}

  In Problem~\ref{prob:minintset}, the solution set must intersect all the given intervals.
  By allowing that the set does not intersect some of them, smaller-sized sets can be the solution sets.
  We extend Problem~\ref{prob:minintset} to the following problem.

\begin{problem}[Exception-allowable minimum intersecting set problem]\label{prob:minintsetwexc}
For a given set of intervals $\{[\ell_{1},u_{1}),\dots,[\ell_{p},u_{p}) \}$ and a given non-negative integer $c$, find a minimum set that intersects all the intervals $[\ell_j,u_j)$ ($j=1,\dots,p$) except at most $c$ intervals.
\end{problem}

Assume that $\ell_1\leq \ell_2\leq\cdots\leq\ell_p$.
Let $D(i,j)$ ($i=1,\dots,p+1$, $j=0,\dots,c$) denote the size $k$ of the minimum set of real numbers $\{s_1,\dots,s_k\}$ that intersects all the intervals $[\ell_h,u_h)$ for $h=i,\dots,p$ except at most $j$ intervals.
  Let $\alpha_i(h)$ ($i=1,\dots,p$, $h=0,\dots,p-i$) denote the index $m$ for which $u_m$ is the $(h+1)$th smallest value among  $u_i,\dots,u_p$ and 
  let $\beta(j)$ ($j=1,\dots,p$) denote the minimum index $m$ with $\ell_m\geq u_j$ if such $m$ exists and $p+1$ otherwise.
We also define a subsequence $h_1,\dots,h_{d_{i,j}}$ of $0,\dots,j$ which is composed of the elements $h\in \{0,\dots, j\}$ with $h=0$ or $\beta(\alpha_i(h-1))<\beta(\alpha_i(h))$, where $d_{i,j}$ is the number of such elements. Note that $h_1=0$ always holds.

\begin{algorithm}[tb]
\caption{Min\_IntSet\_wExc}\label{Min_IntSet_wExc}
\begin{algorithmic}[1]
  \REQUIRE
\begin{minipage}[t]{14cm}
  $c$ : number of allowable exceptions\\
  $[\ell_{1},u_{1}),\dots,[\ell_p,u_p)$ : $p(>c)$ intervals with $\ell_i\leq \ell_{i+1}$ ($i=1,\dots,p-1$)
\end{minipage}
\ENSURE \begin{minipage}[t]{13.4cm}
  $ \{ s_{1},...,s_{k} \}$ : Minimum set satisfying $\{ s_{1},\dots,s_{k} \}\cap[\ell_{i},u_{i}) \neq \emptyset$\\
    \hspace*{\fill}$(i\in\{1,\dots,p\}\setminus E)$ with $|E|\leq c$\\
    $\{ I_{1},..I_{k} \}$ : $I_{j}=\{i \in I | j=\arg\min_{j'} d(s_{j'},[\ell_i,u_{i}))\} \  (j=1,...,k)$,\\
      \hspace*{\fill}where $d(q,S)=\inf_{x\in S}|q-x|$
      \end{minipage}
\STATE $i_1,\dots,i_p\gets$ index list sorted by $u_i$ so as to satisfy $u_{i_1}\leq u_{i_2}\leq \dots\leq u_{i_p}$
\STATE $k\gets 1$
\FOR {$j=1$ to $p$}
\STATE  \textbf{while} $u_{i_j}<\ell_k$ \textbf{do} $k\gets k+1$ \COMMENT{$k=\min \{k'\mid u_{i_j}\geq \ell_{k'}\}$}
\STATE $\beta(i_j)\gets k$
\ENDFOR
\STATE $\alpha(0)\gets p+1$, $u_{p+1}\gets \infty$, $d\gets 0$  \COMMENT{$u_{p+1}$: dummy}
\STATE Min\_IntSet\_Rec(i) for $i=p+1,\dots,1$
\STATE $j\gets c$
\STATE \textbf{while} $j>0$ and $D(1,j)=D(1,j-1)$ \textbf{do} $j\gets j-1$
\STATE $i'\gets 1$, $(i,h)\gets N(i,j)$, $j\gets j-h$
\WHILE {$i\leq p$}
\STATE $k\gets k+1$
\STATE $s_k\gets (\ell_{\beta(i)-1}+u_i)/2$
\IF {$k=1$}
\STATE $I_k\gets \{1,\dots,i-1\}$
\ELSE
\FOR {$g=i'$ to $i-1$}
\IF {$\ell_g-s_{k-1}<s_k-u_g$} 
\STATE $I_{k-1}\gets I_{k-1}\cup \{g\}$
\ELSE
\STATE $I_k\gets I_k \cup \{g\}$
\ENDIF
\ENDFOR
\ENDIF
\STATE $i'\gets i$, $(i,h)\gets N(i,j)$, $j\gets j-h$
\ENDWHILE
\STATE $k\gets k+1$, $I_k\gets \{i',\dots,n\}$
\RETURN $\{ s_{1},...,s_{k} \}, \{ I_{1},..,I_{k} \} $
\end{algorithmic}
\end{algorithm}

\begin{algorithm}[tb]
\caption{Min\_IntSet\_Rec}\label{Min_IntSet_Rec}
\begin{algorithmic}[1]
\REQUIRE $i$ : interval index (the interval $[\ell,u)$ with the $i$th smallest $\ell$) 
\STATE $j\gets c$
\WHILE {$j>=p+1-i$}
\STATE $D(i,j)\gets 0$, $N(i,j)\gets (p+1,0)$;
\STATE $j\gets j-1$
\ENDWHILE
\STATE \textbf{if} $j<0$ \textbf{then} \textbf{return}
\STATE $h\gets 0$, $k\gets 1$
\WHILE {$h\leq j$}
\STATE \textbf{if} $u_{\alpha(h)}>u_i$ \textbf{then} \textbf{break}
\STATE $D(i,h)\gets D(i+1,h)$, $N(i,h)\gets N(i+1,h)$
\STATE \textbf{if} $h_k=h$ \textbf{then} $k\gets k+1$
\ENDWHILE
\STATE $\alpha(m+1)\gets \alpha(m)$ for $m=j,\dots,h$
\STATE $\alpha(h)\gets i$
\STATE $h_m \gets h_m+1$ for $m=k,\dots,d$
\STATE \textbf{if} $d>0$ and $h_d>c$ \textbf{then}  $d\gets d-1$
\IF {$h=0$ or $\beta(i)>\beta(\alpha(h-1))$)}
\IF {$\beta(i)<\beta(\alpha(h_k))$}
\STATE $h_{m+1}\gets h_m$ for $m=d,\dots,k$
\STATE $d\gets d+1$
\ENDIF
\STATE $h_k\gets h$
\ENDIF
\WHILE {$h\leq j$}
\STATE  $h_*\gets 0$, $i_*\gets \beta(\alpha(0))$
\FOR { $k=1$ to $d$}
\STATE \textbf{if} $h_k>h$ \textbf{then} \textbf{break}
\IF {$D(\beta(\alpha(h_k)),h-h_k)<D(i_*,h-h_*)$}
\STATE $h_*\gets h_k$, $i_*\gets \beta(\alpha(h_k))$
\ENDIF
\ENDFOR
\STATE $D(i,h)\gets D(i_*,h-h_*)+1$, $N(i,h)\gets (i_*,h-h_*)$
\STATE $h\gets h+1$
\ENDWHILE
\RETURN
\end{algorithmic}
\end{algorithm}

 \begin{theorem}
  $D(i,j)$ for $i=1,\dots,p+1$ and $j=0,\dots,c$ satisfies the following recursive formula:
  \begin{align}
  D(i,j)
    =&\begin{cases}
      0 & (i+j\geq p+1)\\
  \displaystyle\min_{k\in \{1,\dots,d_{i,j}\}} D(\beta(\alpha_i(h_k)),j-h_k)+1& (i+j<p+1).
  \end{cases} \label{eq:D}
  \end{align}
  \end{theorem}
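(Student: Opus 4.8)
The plan is to establish the recursive formula \eqref{eq:D} by treating the two cases separately, and in the recursive case proving the inequalities ``$\le$'' and ``$\ge$'' individually. First I would fix a tie-breaking rule so that $\alpha_i(0),\dots,\alpha_i(p-i)$ is exactly the list of indices of $\{i,\dots,p\}$ sorted by upper endpoint, $u_{\alpha_i(0)}\le\cdots\le u_{\alpha_i(p-i)}$, and record three elementary facts used repeatedly: (i) $D(i,\cdot)$ is nonincreasing in its second argument; (ii) $\ell$ is nondecreasing in the index, so $\ell_{\alpha_i(h)}<u_{\alpha_i(h)}$ forces $\beta(\alpha_i(h))\ge \alpha_i(h)+1\ge i+1$, and the interval $[\ell_{\beta(\alpha_i(h))-1},u_{\alpha_i(h)})$ is nonempty; (iii) if $u_g\le u_{\alpha_i(h)}$ then $\ell_g<u_{\alpha_i(h)}$ and hence $g<\beta(\alpha_i(h))$. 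The base case $i+j\ge p+1$ is immediate: $\{i,\dots,p\}$ has at most $p-i+1\le j$ intervals, so the empty set is feasible and $D(i,j)=0$. In the recursive case $j\le p-i$, so $\alpha_i(h)$ is defined for every $h\in\{0,\dots,j\}$, and I would first prove
\[
D(i,j)=\min_{h\in\{0,\dots,j\}}\bigl(D(\beta(\alpha_i(h)),\,j-h)+1\bigr),
\]
deferring the reduction of the minimization range to the subsequence $h_1,\dots,h_{d_{i,j}}$ to the end.

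For ``$\le$'', fix $h\in\{0,\dots,j\}$ and build a feasible solution for $(i,j)$ of size $D(\beta(\alpha_i(h)),j-h)+1$: declare the $h$ intervals $\alpha_i(0),\dots,\alpha_i(h-1)$ with the smallest upper endpoints to be exceptions (by fact (iii) they all lie in $\{i,\dots,\beta(\alpha_i(h))-1\}$), pick any $s\in[\ell_{\beta(\alpha_i(h))-1},u_{\alpha_i(h)})$, and adjoin an optimal solution of the subproblem on $\{\beta(\alpha_i(h)),\dots,p\}$ with $j-h$ allowed exceptions. Every non-exception index $g\in\{i,\dots,p\}$ has $u_g\ge u_{\alpha_i(h)}>s$, and every $g\le\beta(\alpha_i(h))-1$ has $\ell_g\le s$, so $s$ meets every non-exception interval in $\{i,\dots,\beta(\alpha_i(h))-1\}$; the two index blocks and their exception budgets are disjoint, so the union is feasible with total budget $\le h+(j-h)=j$. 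Taking the minimum over $h$ gives ``$\le$''.

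For ``$\ge$'', take an optimal pair $(S,E)$ with $|S|=D(i,j)=k$, $E\subseteq\{i,\dots,p\}$, $|E|\le j$, and $S$ meeting every interval of $\{i,\dots,p\}\setminus E$. Since $i+j<p+1$ there are more than $j\ge|E|$ intervals, so $k\ge1$; set $s:=\min S$ and $t:=|\{g\in\{i,\dots,p\}:u_g\le s\}|$. Each such interval ends at or before $s$, so no point of $S$ (all $\ge s$) meets it and it must lie in $E$; hence $t\le|E|\le j$, and $t\le p-i$ (otherwise all intervals are forced exceptions, contradicting $|E|\le j$). Then $\alpha_i(0),\dots,\alpha_i(t-1)$ are exactly these $t$ intervals and $u_{\alpha_i(t)}>s$, so by fact (iii) they all have index $<\beta(\alpha_i(t))$, while every $g\ge\beta(\alpha_i(t))$ has $\ell_g\ge u_{\alpha_i(t)}>s$, i.e. $s$ meets no interval of index $\ge\beta(\alpha_i(t))$. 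Therefore $S\setminus\{s\}$ (size $k-1$) meets every interval of $\{\beta(\alpha_i(t)),\dots,p\}\setminus E$ and $|E\cap\{\beta(\alpha_i(t)),\dots,p\}|\le j-t$, so $k-1\ge D(\beta(\alpha_i(t)),j-t)$ and $D(i,j)\ge\min_{h\in\{0,\dots,j\}}(D(\beta(\alpha_i(h)),j-h)+1)$.

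It remains to restrict the minimum to $\{h_1,\dots,h_{d_{i,j}}\}$: for any $h\in\{0,\dots,j\}$, the largest subsequence element $h_k\le h$ satisfies $\beta(\alpha_i(h_k))=\beta(\alpha_i(h))$ (on $\{h_k,\dots,h\}$ the map $\beta\circ\alpha_i$ never strictly increases, else that point would itself lie in the subsequence), and since $j-h_k\ge j-h$ and $D$ is nonincreasing in its second argument, $D(\beta(\alpha_i(h_k)),j-h_k)+1\le D(\beta(\alpha_i(h)),j-h)+1$; hence the two minima coincide, which finishes the proof. The hard part is the structural claim in the ``$\ge$'' direction --- that the intervals the smallest chosen point $s$ forces into $E$ form a $u$-sorted prefix of size $t\le j$ whose indices all precede $\beta(\alpha_i(t))$ --- since this is exactly what certifies that $S\setminus\{s\}$ solves the clean subproblem $(\beta(\alpha_i(t)),j-t)$; once the $u$-value tie-breaking is fixed, the base case, the ``$\le$'' construction, the monotonicity of $D$, and the subsequence reduction are all routine.
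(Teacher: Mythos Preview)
Your proof is correct and rests on the same core idea as the paper's---analyzing the smallest element of an optimal solution and counting how many intervals it forces into the exception set---but you organize the argument differently. The paper works directly with the subsequence $h_1,\dots,h_{d_{i,j}}$: it partitions the possible locations of $s_1=\min S$ into the regions $[\ell_1,u_{\alpha_i(h_1)}),\,[u_{\alpha_i(h_1)},u_{\alpha_i(h_2)}),\,\dots$ and, region by region, argues what the best choice of $s_1$ is and which subproblem remains. You instead first establish the cleaner intermediate identity
\[
D(i,j)=\min_{h\in\{0,\dots,j\}}\bigl(D(\beta(\alpha_i(h)),\,j-h)+1\bigr)
\]
via separate ``$\le$'' (explicit construction) and ``$\ge$'' (extract $t$ from an optimal pair $(S,E)$) arguments, and only afterwards reduce the minimization range to $\{h_1,\dots,h_{d_{i,j}}\}$ by a one-line monotonicity observation. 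Your decomposition separates concerns more sharply and makes the role of the subsequence transparent (it merely prunes dominated choices of $h$); the paper's version is more compressed but entangles the optimality argument with the subsequence structure.
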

  \begin{proof}
    In the case with $i+j\geq p+1$, Eq. (\ref{eq:D}) holds trivially because the number of intervals $p+1-i$ is at most $j$ in this case.

    In the case with $i+j< p+1$, $D(i,j)>0$ holds because the number of intervals $p+1-i$ is more than $j$.
Let $S$ be the minimum-sized real number set that intersects all the intervals $[\ell_h,u_h)$ for $h=i,\dots,p$ except at most $j$ intervals, and define $s_1=\min S$.
Then, $s_1< u_{\alpha_i(j)}$ must hold because at least $j+1$ intervals $[\ell_i,u_i)$ for $i=\alpha_i(0),\dots,\alpha_i(j)$ do not intersect $S$ otherwise.
Thus, $s_1$ must be included in one of $d_{i,j}$ regions $[\ell_1,u_{\alpha_i(h_1)}),[u_{\alpha_i(h_1)},u_{\alpha_i(h_2)}),\dots,[u_{\alpha_i(h_{d_{i,j}-1})},u_{\alpha_i(h_{d_{i,j}})})$.
Divide intervals $[\ell_i,u_i),\dots,[\ell_p,u_p)$ into $[\ell_i,u_i),\dots,[\ell_{m-1},u_{m-1})$ and $[\ell_m,u_m),\dots,[\ell_p,u_p)$ by $m=\min \{h\mid \ell_h> s_1\}$.
Note that $S$ can intersect the prefix interval list $[\ell_i,u_i),\dots,[\ell_{m-1},u_{m-1})$ by $s_1$ only,
and the suffix interval list $[\ell_m,u_m),\dots,[\ell_p,u_p)$ by $S\setminus \{s_1\}$ only.
In the case with $s_1\in [\ell_1,u_{\alpha_i(h_1)})$, where $h_1=0$, we can minimize the size of $S$ by setting $s_1\in [\ell_{\beta_i(\alpha_i(0))-1},u_{\alpha_i(0)})$, for example $s_1=\frac{\ell_{\beta_i(\alpha_i(0))-1}+u_{\alpha_i(0)}}{2}$,
because this $s_1$ minimize the suffix list ($[\ell_{\beta_i(\alpha_i(0))},u_{\beta_i(\alpha_i(0))}),\dots,[\ell_p,u_p)$) and also minimize the number of intervals that are in the prefix list and not intersecting with $s_1$, which results in allowing more exceptions in the suffix list.
    Therefore, the minimum size of $S$ can be represented as $D(\beta_i(\alpha_i(h_1)),j)+1$ in this case because $s_1$ intersects all the intervals in the prefix list.
    
    In the case with $s_1\in [u_{\alpha_i(h_{k-1})},u_{\alpha_i(h_k)})$ ($k=2,\dots,d_{i,j}$), set $s_1\in [\ell_{\beta_i(\alpha_i(h_k))-1},u_{\alpha_i(h_k)})$, for example $s_1=\frac{\ell_{\beta_i(\alpha_i(h_k))-1}+u_{\alpha_i(h_k)}}{2}$.
      Note that $s_1\in [u_{\alpha_i(h_{k-1})},u_{\alpha_i(h_k)})$ holds because $u_{\alpha_i(h_{k-1})}\leq \ell_{\beta_i(\alpha_i(h_{k-1}))}\leq \ell_{\beta_i(\alpha_i(h_{k}-1))}\leq \ell_{\beta_i(\alpha_i(h_{k}))-1}<u_{\alpha_i(h_k)}$.
        This $s_1$ minimize the suffix list ($[\ell_{\beta_i(\alpha_i(h_k))},u_{\beta_i(\alpha_i(h_k))}),\dots,[\ell_p,u_p)$) and also minimize the number of the intervals that are in the prefix list and not intersecting with $s_1$. Note that, in prefix list, non-intersecting intervals with this $s_1$ are  
$[\ell_{\alpha_i(0)},u_{\alpha_i(0)}),\dots,[\ell_{\alpha_i(h_k)},u_{\alpha_i(h_k)})$, and more intervals are not intersecting with $s_1$ if $s_1\in [u_{\alpha_i(h_{k-1})},\ell_{\beta_i(\alpha_i(h_k))-1})$.
        Therefore, in this case, the minimum size of $S$ can be represented as $D(\beta_i(\alpha_i(h_k)),j-h_k)+1$.
        Thus, Eq.~(\ref{eq:D}) also holds for $i+j<p+1$.
 \end{proof}

 \begin{remark}\label{remark:d(i,j)}
    $D(i,j)=D(i+1,j)$ holds if $j$ is smaller than $h$ with $\alpha_i(h)=i$
    because the smallest real value $s_1$ must be less than $\alpha_i(j)$ and $\alpha_i(j)\leq \alpha_i(h)=i$ holds in this case,
    which means any possible $s_1$ intersects $[\ell_i,u_i)$. 
 \end{remark}
  
  What we want is $D(1,c)$ and it can be calculated using Eq.~(\ref{eq:D}) by dynamic programming.
Algorithm Min\_IntSet\_wExc (Algorithm~\ref{Min_IntSet_wExc}) is an algorithm for Problem~\ref{prob:minintsetwexc} with the number of exception $c$ and an interval list $[\ell_{1},u_{1}),\dots,[\ell_p,u_p)$  ($p>c)$ sorted by $\ell_i$ in ascending order.
    The algorithm outputs the answer $\{s_1,\dots,s_k\}$ for the problem and also outputs the set $I_j$ for each $j=1,\dots,k$ whose member interval $i$'s nearest value $s_{j'}$ among $\{s_1,\dots,s_k\}$ is $s_j$, that is, $j=\arg\min_{j'} d(s_{j'},[\ell_i,u_{i}))\}$, where $d(q,S)=\inf_{x\in S}|q-x|$. In the algorithm, the values of function $\beta:\{1,\dots,p\}\rightarrow \{2,\dots,p+1\}$ are set first in Lines~1-6,
      then $D(i,\cdot)$ is calculated by calling sub-procedure Min\_IntSet\_Rec($i$) (Algorithm~\ref{Min_IntSet_Rec}) in descending order of $i$ using dynamic programming in Lines~7-8. After executing Line~8, $D(i,j)$ for all $(i,j)\in \{1,\dot,p+1\}\times \{0,\dots,c\}$ have been set and $D(1,c)$ is set to the size of the minimum set for the problem. Then, the algorithm finds the smallest exception number $j$ satisfying $D(1,j)=D(1,c)$ in Lines~9-10,
      and construct the optimal value set $\{s_1,\dots,s_k\}$ and interval index set family $\{I_1,\dots,I_k\}$ assigned to its member value
      (Lines~11-28) by tracing the optimal $D(\beta(\alpha_i(h_k)),j-h_k)$ in Recursive formula~(\ref{eq:D}) which is stored in $N(i,j)$ by executing
      Min\_IntSet\_Rec($i$).

      Min\_IntSet\_Rec($i$) calculates $D(i,\cdot)$ according to Recursive formula~(\ref{eq:D}) using dynamic programming.
      Set $D(i,j)$ to $0$ for $i+j\geq p+1$ in Lines~1-5 first, then, in Lines~7-12, set $D(i,j)$ to $D(i+1,j)$ for $j< h_{(i)}$ with $\alpha_i(h_{(i)})=i$ following the rule described in Remark~\ref{remark:d(i,j)}.
      At the beginning of the procedure, $\alpha(0),\dots,\alpha(c)$ and $h_1,\dots, h_d$ are set to $\alpha_{i-1}(0),\dots,\alpha_{i-1}(c)$ and their corresponding $h_1,\dots,h_{d_{i-1,j}}$, respectively. In Lines~13-23, $\alpha(0),\dots,\alpha(c)$ and $h_1,\dots,h_d$ are updated to $\alpha_{i}(0),\dots,\alpha_{i}(c)$ and their corresponding $h_1,\dots,h_{d_{i,j}}$ by inserting $i$ into $\alpha(0),\dots,\alpha(c)$ at the appropriate position $h$ which is searched in the while loop right before. In the last loop from Line 24 to 34, $D(i,j)$ is set to $\min_{k\in \{1,\dots,d_{i,j}\}} D(\beta(\alpha_i(h_k)),j-h_k)+1$ by Recursive formula~(\ref{eq:D}) for $j\geq h_{(i)}$.

      The time and space complexity of Algorithm Min\_IntSet\_wExc are $O(pc^2+p\log p)$ and $O(pc)$ because $O(p\log p)$ time is needed for sorting the index list by $u_i$, $D(i,\cdot)$ calculation in Min\_IntSet\_Rec($i$) consumes $O(\sum_{j=0}^c(j+1))=O(c^2)$ for each $i=p+1,\dots,1$ and Tables $D$ and $N$ need $O(pc)$ space. Thus, Problem~\ref{prob:minintsetwexc} can be solved in $0(p(c^2+\log p))$ time and $O(pc)$ space even including the sorting by $\ell_i$.

      A heuristic approximate algorithm for Problem~\ref{prob:DBC} can be made by modifying Algorithm Min\_DBN
      as follows using Min\_IntSet\_wExc instead of Min\_IntSet.
      
\begin{algorithmic}[1]
\setcounter{ALC@line}{8}
\FOR {$i=1$ to $d$}
\algsetup{
  linenodelimiter = {'-1:}
}
\STATE $[\ell_{j_1,h_1},u_{j_1,h_2}),\dots,[\ell_{j_{n_i},h_{n_i}},u_{j_{n_i},h_{n_i}})\gets$ $L_i$'s elements sorted by $\ell_{j,h}$ in ascending order
\setcounter{ALC@line}{9}
\algsetup{
  linenodelimiter = {'-2:}
}
\STATE $\{s_1,\dots,s_k\},\{I_1,\dots,I_k\}\gets$ Min\_IntSet\_wExc($[\ell_{j_1,h_1},u_{j_1,h_2}),\dots,[\ell_{j_{n_i},h_{n_i}},u_{j_{n_i},h_{n_i}})$)
\algsetup{
  linenodelimiter = {:}
}    
\FOR {$g=1$ to $k$}
\algsetup{
  linenodelimiter = {':}
}    
\FOR {\textbf{each} $f\in I_g$}
\STATE Replace the branching condition $(i,\theta)$ attached to node $N_{j_f,h_f}$ with $(i,s_g)$
\algsetup{
  linenodelimiter = {:}
}
\ENDFOR
\ENDFOR
\ENDFOR
\end{algorithmic}

Let $N$ denote the number of branching nodes in a given decision forest.
Since Min\_IntSet\_wExc ($[\ell_{j_1,h_1},u_{j_1,h_2}),\dots,[\ell_{j_{n_i},h_{n_i}},u_{j_{n_i},h_{n_i}})$) for all $i=1,\dots,d$ totally consumes at most $O(N(c^2+\log N))$ time, time complexity of this version of Min\_DBN is $O(N(n + c^2+\log N)+d)$.
Space complexity of this Min\_DBN is $O(dn+Nc)$ considering the space used by Tables $D$ and $N$.

\section{Experiments}

We conducted experiments to check the effectiveness of branching condition sharing
using datasets in the UCI machine learning repository \citep{Dua:2017}.

\subsection{Settings}

\begin{table}[tb]
  \caption{Datasets used in our experiments. Notation $C_\ell$ means that the number of distinct class labels is $\ell$.}\label{tbl:dataset}
  {\scriptsize
  \begin{tabular}{|l|r|r|c|p{8.2cm}|}
\hline
dataset & \multicolumn{1}{c|}{$n$} & \multicolumn{1}{c|}{$d$} & $Y$ & reference in \citep{Dua:2017}\\
\hline
iris & 150 & 4 & $C_3$ & Iris\\
parkinsons & 195 & 22 & $C_2$ & Parkinsons \citep{parkinson}\\
breast cancer & 569 & 30 & $C_2$ & Breast Cancer Wisconsin (Diagnostic)\\
blood & 748 & 4 & $C_2$ & Blood Transfusion Service Center \citep{blood}\\
RNA-Seq PANCAN & 801 & 20531 & $C_5$ & gene expression cancer RNA-Seq \citep{PANCAN}\\
winequality red & 1599 & 11 & $C_{11}$ & Wine Quality \citep{wine}\\
winequality white & 4898 & 11 & $C_{11}$ & Wine Quality \citep{wine}\\
waveform & 5000 & 40 & $C_3$ & Waveform Database Generator (Version 2)\\
robot & 5456 & 24 & $C_4$ & Wall-Following Robot Navigation\\
musk & 6598 & 166 & $C_2$ & Musk (Version 2)\\
epileptic seizure & 11500 & 178 & $C_5$ & Epileptic Seizure Recognition \citep{EpilepticSeizureRecognition}\\
magic & 19020 & 10  & $C_2$ & MAGIC Gamma Telescope\\
hepmass & 7000000 & 28 & $C_2$ & HEPMASS (train)\\
\hline
Real estate & 414 & 7 & $\mathbb{R}$ & Real estate valuation data set \citep{RealEstate}\\
airfoil & 1502 & 5 & $\mathbb{R}$ & Airfoil Self-Noise Data Set\\
blogdata & 60021 & 280 & $\mathbb{R}$ & BlogFeedback Data Set \citep{blogdata}\\
Adelaide & 71999 & 48 & $\mathbb{R}$ & Wave Energy Converters Data Set\\
Perth & 72000 & 48 & $\mathbb{R}$ & Wave Energy Converters Data Set\\
Sydney & 72000 & 48 & $\mathbb{R}$ & Wave Energy Converters Data Set\\
Tasmania & 72000 & 48 & $\mathbb{R}$ & Wave Energy Converters Data Set\\
Year Predict MSD & 515344 & 90 & $\mathbb{R}$ & YearPredictionMSD Data Set\\
\hline
  \end{tabular}
}
\end{table}

We used 21 numerical-feature datasets registered in the UCI machine learning repository \citep{Dua:2017},
whose number of instances $n$, number of features $d$, and label space $Y$ are shown in Table~\ref{tbl:dataset}.
Among them, the task for the first 13 datasets is classification, and that for the rest 8 datasets is regression. 
In the table, datasets are sorted in the order of the number of instances for each task.
The largest one is hepmass dataset which has 7 million instances.
The dataset with the largest number of features is RNA-Seq PANCAN whose number of features is more than 20 thousand.
Note that the number of features is larger than the number of instances only for this dataset.
The number of distinct class labels is not so large for all the classification datasets we used,
and winequality datasets have the largest number of distinct labels (11 distinct labels).

Tree ensemble classifiers and regressors used in the experiments are random forest \citep{Breiman2001},
extremely randomized trees \citep{GEW2006}, AdaBoost \citep{FREUND1997119}, and gradient boosting \citep{Friedman00}.
The used implemented classifiers and regressors of those four tree ensemble learners are
RandomForestClassifier, RandomForestRegressor, ExtraTreesClassifier(bootstrap=True), ExtraTreesRegressor(bootstrap=True),
AdaBoostClassifier(base\_estimator=DecisionTreeClassifier(random\_state=0), AdaBoostRegressor(base\_estimator=DecisionTreeRegressor(random\_state=0)), GradientBoostingClassifier and GradientBoostingRegressor of Scikit-learn version 1.1.dev0 \citep{scikit-learn}.
The parameters of the classifiers/regressors are set to defaults except for the number of trees\footnote{Note that boosting learners stop when the learned current classifiers/regressors fit the training data even if the number of iterations does not reach n\_estimators.}  (n\_estimators), the number of jobs to run in parallel (n\_jobs) and the seed used by the random number generator (random\_state): $\text{n\_estimators}=100$, $\text{n\_jobs}=-1$ (which means the same as the number of processors) and $\text{random\_state}=0$.
Note that parameter random\_state is fixed in order to ensure that the same classifier/regressor is generated for the same training dataset. Also note that the number of randomly selected features used for branching conditions of each component tree in random forests is set to $\sqrt{d}$ as the default value.

All the experiments are done by 5-fold cross-validation using the function sklearn.model\_selection.
KFold(n\_splits=5, shuffle = True, random\_state=0), where random\_state is also fixed for reproducibility.
To classifiers/regressors learned by tree ensemble learners using training data, Min\_DBN is applied with the training data as feature vectors ignoring their labels. 
Performance is evaluated using test data by prediction accuracy and the number of distinct branching conditions of the learned classifiers/regressors,
where regressor accuracy is measured using the coefficient of determination $R^2(\{(y,\hat{y})\})=1-\sum(y-\hat{y})^2/\sum(y-\bar{y})^2$ for real label $y$, predicted label $\hat{y}$ and average label $\bar{y}$.

\subsection{Results}

\subsubsection{Effectiveness for the four tree ensemble learners}

\begin{table}[tb]
\caption{Performance of branching condition sharing for tree ensemble classifiers/regressors}\label{tbl:perf-four-types}
{\scriptsize
\begin{tabular}{@{}l@{\ }rrrr|@{\ }l@{\ }|l@{\ }rrrr@{}}
dataset & RF & ERT & ABoost & GBoost & eval & dataset & RF & ERT & ABoost & GBoost \\
\hline
\multirow{5}{*}{\begin{minipage}{1cm}iris\end{minipage}} & 1.0000&1.0000&1.0000&1.0000& TrA&\multirow{5}{*}{\begin{minipage}{1cm}magic\end{minipage}} & 1.0000&0.99999&1.0000&0.88379\\
&0.94667&0.94667&0.95333&0.96000& TeA&&0.87976&0.87429&0.81872&0.87108\\
 &103.8&1354.0&7.6&47.8& NDC &&113713.4&289735.0&1731.8&501.4\\
 &0.42004&0.074003&0.94737&0.61088& SR &&0.084643&0.028258&0.45791&0.85880\\
 &1.0211&1.0141&1.0000&1.0000& AR &&0.99928&0.99898&0.99929&0.99982\\
\hline
\multirow{5}{*}{\begin{minipage}{1cm}parkin-son\end{minipage}} & 1.0000&1.0000&1.0000&1.0000& TrA&\multirow{5}{*}{\begin{minipage}{1cm}hep-mass\end{minipage}} & 0.99999&0.99999&1.0000&0.85607\\
&0.89231&0.91795&0.85641&0.90256& TeA&&0.86937&0.86723&0.81675&0.85594\\
 &1012.6&2564.8&13.8&217.2& NDC &&34885020.4&82801422.6&449759.8&457.0\\
 &0.39897&0.25483&1.0000&0.60221& SR &&0.010160&0.0019280&0.080108&0.99737\\
 &1.0000&0.98324&1.0000&1.0000& AR &&0.99996&0.99984&0.99935&1.0000\\
\hline
\multirow{5}{*}{\begin{minipage}{1cm}breast cancer\end{minipage}} & 1.0000&1.0000&1.0000&1.0000& TrA&\multirow{5}{*}{\begin{minipage}{1cm}Real estate\end{minipage}} & 0.95285&0.95224&0.99862&0.93718\\
&0.96137&0.96842&0.91220&0.96665& TeA&&0.69149&0.69967&0.68901&0.67055\\
 &1415.4&3707.8&18.2&290.8& NDC &&5077.4&20708.8&5027.2&266.8\\
 &0.41953&0.28276&0.98901&0.65612& SR &&0.12487&0.029002&0.12735&0.66942\\
 &1.0018&0.99817&1.0000&1.0000& AR &&1.0029&1.0028&0.99757&0.99556\\
\hline
\multirow{5}{*}{\begin{minipage}{1cm}blood\end{minipage}} & 0.93683&0.93683&0.93683&0.85361& TrA&\multirow{5}{*}{\begin{minipage}{1cm}airfoil\end{minipage}} & 0.99029&0.99104&0.99947&0.88707\\
&0.74863&0.75668&0.73798&0.77937& TeA&&0.93337&0.94024&0.91629&0.85238\\
 &321.6&15786.8&267.0&137.4& NDC &&1337.0&75599.6&1489.0&123.4\\
 &0.45025&0.010046&0.48839&0.69869& SR &&0.11399&0.0019841&0.10423&0.79903\\
 &1.0000&0.99470&0.99819&0.99827& AR &&1.0002&0.99960&1.0018&0.99968\\
\hline
\multirow{5}{*}{\begin{minipage}{1cm}RNA-Seq PANCAN\end{minipage}} & 1.0000&1.0000&1.0000&1.0000& TrA&\multirow{5}{*}{\begin{minipage}{1cm}blog-data\end{minipage}} & 0.93364&0.93356&0.97597&0.68252\\
&0.99625&0.99625&0.97877&0.98002& TeA&&0.57227&0.57405&0.41710&0.56770\\
 &1924.0&3373.6&9.0&702.2& NDC &&81586.8&1141949.4&25144.4&338.0\\
 &0.95551&0.93817&1.0000&0.97835& SR &&0.090640&0.0048244&0.16200&0.86982\\
 &1.0000&1.0000&1.0000&1.0000& AR &&1.0014&0.99963&0.99156&0.99945\\
\hline
\multirow{5}{*}{\begin{minipage}{1cm}wine-quality red\end{minipage}} & 1.0000&1.0000&1.0000&0.89149& TrA&\multirow{5}{*}{\begin{minipage}{1cm}Ade-laide\end{minipage}} & 0.98747&0.98781&0.99967&0.95891\\
&0.69168&0.69731&0.64292&0.65165& TeA&&0.91242&0.91535&0.93205&0.95539\\
 &4115.4&44535.4&317.4&940.6& NDC &&3456308.8&3641198.6&2928453.0&687.2\\
 &0.21869&0.021399&0.57656&0.61748& SR &&0.017591&0.011090&0.015845&0.98370\\
 &1.0027&0.99370&1.0049&1.0048& AR &&0.99837&0.99622&0.99856&1.0000\\
\hline
\multirow{5}{*}{\begin{minipage}{1cm}wine-quality white\end{minipage}} & 1.0000&1.0000&1.0000&0.73117& TrA&\multirow{5}{*}{\begin{minipage}{1cm}Perth\end{minipage}} & 0.98520&0.98593&0.99964&0.95532\\
&0.67926&0.68048&0.59902&0.59208& TeA&&0.89634&0.90210&0.91897&0.95139\\
 &7288.6&140137.8&812.2&1186.6& NDC &&3486197.0&3629525.8&2982017.6&688.6\\
 &0.19249&0.010184&0.45900&0.63509& SR &&0.016603&0.010542&0.015146&0.98141\\
 &0.99669&1.0000&0.99932&0.99965& AR &&0.99815&0.99549&0.99811&1.0000\\
\hline
\multirow{5}{*}{\begin{minipage}{1cm}wave-form\end{minipage}} & 1.0000&1.0000&1.0000&0.95570& TrA&\multirow{5}{*}{\begin{minipage}{1cm}Sydney\end{minipage}} & 0.98797&0.98804&0.99983&0.91439\\
&0.85380&0.86260&0.73320&0.85540& TeA&&0.91627&0.91692&0.93891&0.90881\\
 &32516.8&84971.6&436.2&1221.0& NDC &&2703689.0&2937518.6&2354579.0&680.8\\
 &0.18061&0.063826&0.72536&0.80999& SR &&0.022288&0.014918&0.019310&0.97268\\
 &1.0042&1.0014&0.99891&0.99930& AR &&0.99868&0.99663&0.99855&1.0000\\
\hline
\multirow{5}{*}{\begin{minipage}{1cm}robot\end{minipage}} & 1.0000&1.0000&1.0000&1.0000& TrA&\multirow{5}{*}{\begin{minipage}{1cm}Tas-mania\end{minipage}} & 0.97651&0.97624&0.99965&0.90980\\
&0.99487&0.96793&0.99358&0.99725& TeA&&0.83493&0.83397&0.86483&0.90248\\
 &9897.2&56820.6&28.6&511.6& NDC &&3500417.0&3641163.8&3034595.4&687.2\\
 &0.31750&0.093568&0.81119&0.69977& SR &&0.016625&0.010852&0.014543&0.98254\\
 &1.0000&1.0008&1.0004&1.0006& AR &&0.99765&0.99410&0.99741&1.0000\\
\hline
\multirow{5}{*}{\begin{minipage}{1cm}musk\end{minipage}} & 1.0000&1.0000&1.0000&0.97514& TrA&\multirow{5}{*}{\begin{minipage}{1cm}Year Predict MSD\end{minipage}} & 0.90249&0.90312&0.99877&0.27836\\
&0.97514&0.97363&0.96635&0.96272& TeA&&0.30516&0.31117&0.31138&0.27334\\
 &12220.8&32575.8&135.0&440.2& NDC &&17822075.6&20674664.6&15571234.6&684.4\\
 &0.51039&0.24586&0.96000&0.91186& SR &&0.010382&0.0044004&0.010944&0.99094\\
 &0.99907&1.0005&1.0002&1.0000& AR &&0.99734&0.99282&0.99616&1.0000\\
\hline
\multirow{5}{*}{\begin{minipage}{1cm}epi-leptic seizure\end{minipage}} & 1.0000&1.0000&1.0000&0.79989&\multicolumn{6}{@{}l}{TrA: training accuracy}\\
&0.69504&0.68817&0.47339&0.61652&\multicolumn{6}{@{}l}{TeA: test accuracy}\\
 &75683.2&254271.4&2044.4&2762.0&\multicolumn{6}{@{}l}{NDC: number of distinct conditions}\\
 &0.31092&0.081560&0.66748&0.98226&\multicolumn{6}{@{}l}{SR: size ratio}\\
 &0.99812&0.99532&1.0020&1.0001&\multicolumn{6}{@{}l}{AR: accuracy ratio}\\
\hline
\end{tabular}
}
\end{table}

\begin{figure}[tb]
  \includegraphics[width=0.5\linewidth]{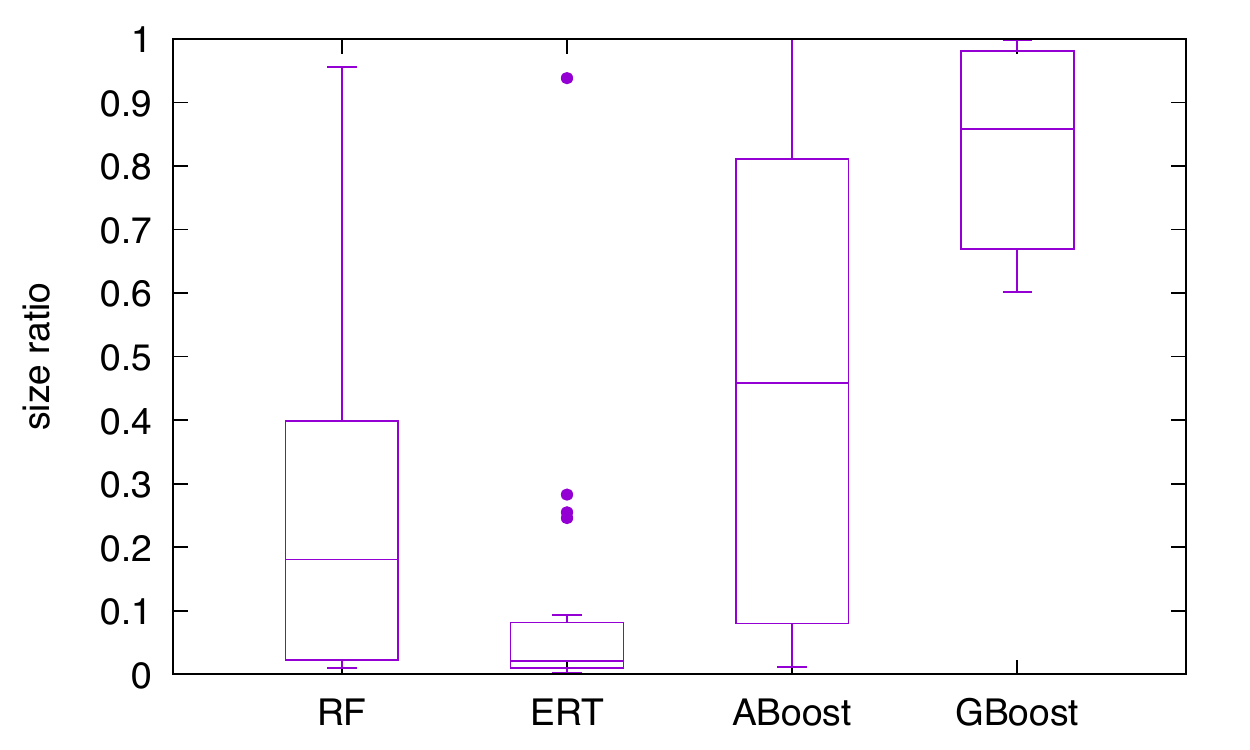}\includegraphics[width=0.5\linewidth]{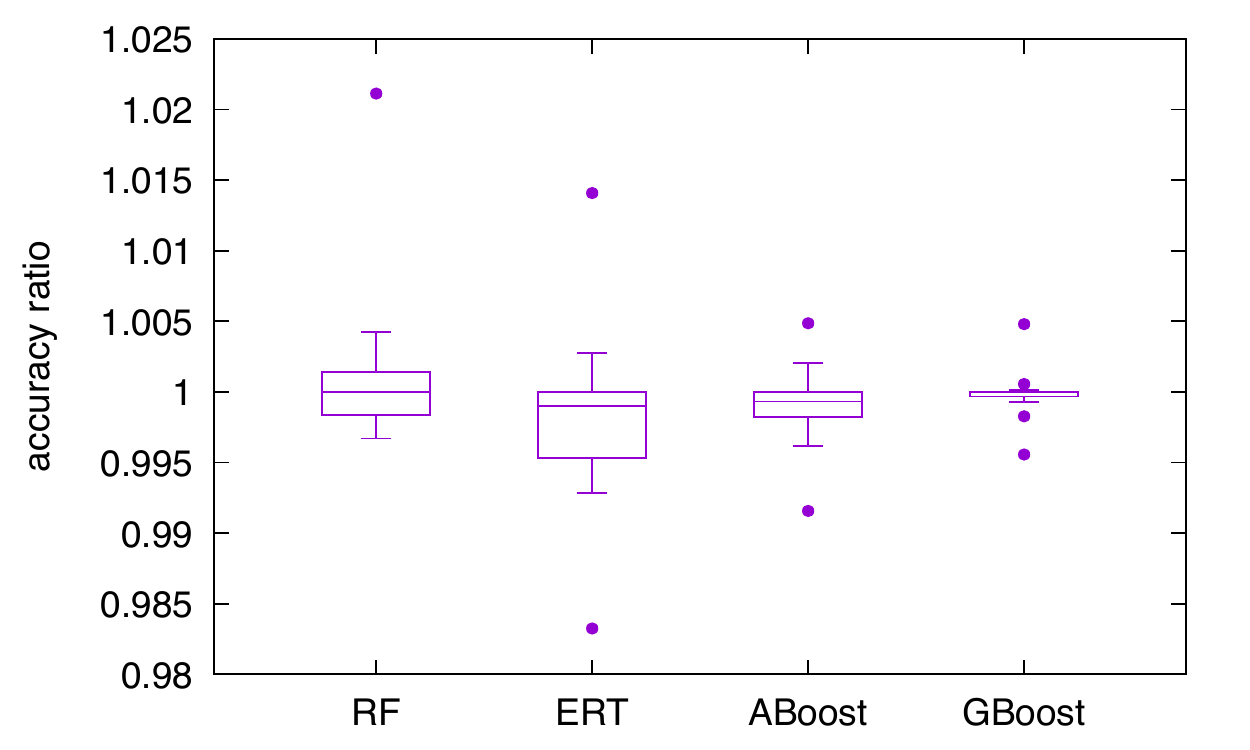}
  \caption{The box plots of size and accuracy ratios for ensemble classifiers/regressors learned from the 21 datasets by the four learners. Here, `size' means the number of distinct branching conditions, and ratios are calculated for their size and accuracy after branching condition sharing compared to their original ones. In the figures, the values outside 1.5 times the interquartile range from the first and third quartile boundaries are plotted as outliers.}\label{fig:perf-four-types}
\end{figure}

Table~\ref{tbl:perf-four-types} shows the results for applying Min\_DBN to the classifiers/regressors learned by
the four tree ensemble learners from the 21 datasets. The result for each dataset consists of 5 rows, training accuracy (TrA), test accuracy (TeA), the number of distinct branching conditions in the tree ensemble classifier/regressor learned from the training data (NDC), and the ratio of NDC (SR) and accuracy (AR) after applying Min\_DBN to the classifier/regressor.
Figure~\ref{fig:perf-four-types} shows the distribution of NDC (size) and accuracy ratios over 21 datasets
for each tree ensemble learner. You can see that our branching condition sharing reduces the number of distinct branching conditions while keeping test accuracy for all the four tree ensemble learners.
Accuracy degradation is within 1\% except for the outliers.
Our branching condition sharing is most effective for extremely randomized trees (ERTs) and next for random forests (RFs).
The size ratio medians for RF and ERT are 18.1\% and 2.1\%, respectively, and most size ratios are within 10\% for ERT.
The remarkable outlier dataset for ERT is RNA-Seq PANCAN, whose size ratio is 93.8\%, and the size ratios for the other three learners are worse.
The number of features in RNA-Seq PANCAN dataset is more than 30 times larger than the number of training instances, 
and the number of distinct features in each original component decision tree is less than 1/10 of the number of features. As a result, the number of appearing branching  conditions for each feature is small, which makes condition sharing difficult.
The sharing is also effective for boosting type learners, AdaBoost (ABoost) and gradient boosting (GBoost), whose median size ratios are 45.9\% for ABoost and 85.9\% for GBoost.
Classifiers/Regressors learned by boosting type learners have small NDCs for many datasets,
which indicates that their redundancy is small to reduce NDCs significantly.

\subsubsection{Effectiveness of using bootstrap training samples per tree}

\begin{table}[tbh]
\caption{Effect of using bootstrap training samples per tree as feature vectors for the tree}\label{tbl:bootstrap-sample}
{\scriptsize
\begin{center}
\begin{tabular}{@{}l@{\ }rr|@{\ }l@{\ }|@{\ }l@{\ }rr|@{\ }l@{\ }rr@{}}
dataset & RF & ERT & EI &dataset & RF & ERT& dataset & RF & ERT \\
\hline
\multirow{6}{*}{\begin{minipage}{1cm}iris\end{minipage}} & 0.42004&0.074003& SR(all)\textcircled{1}&\multirow{6}{*}{\begin{minipage}{1cm}wave-form\end{minipage}} & 0.18061&0.063826&\multirow{6}{*}{\begin{minipage}{1cm}airfoil\end{minipage}} & 0.11399&0.0019841\\
 &0.35838&0.071344& SR(tree)\textcircled{2}&&0.16805&0.051071&&0.11294&0.0019656\\
 &0.85321&0.96407& \textcircled{2}/\textcircled{1}&&0.93046&0.80016&&0.99081&0.99067\\
 &1.0211&1.0141& AR(all)\textcircled{3}&&1.0042&1.0014&&1.0002&0.99960\\
 &1.0141&1.0211& AR(tree)\textcircled{4}&&1.0033&0.99629&&0.99811&0.99455\\
 &0.99310&1.0069& \textcircled{4}/\textcircled{3}&&0.99907&0.99491&&0.99787&0.99495\\
\hline
\multirow{6}{*}{\begin{minipage}{1cm}parkin-son\end{minipage}} & 0.39897&0.25483& SR(all)\textcircled{1}&\multirow{6}{*}{\begin{minipage}{1cm}robot\end{minipage}} & 0.31750&0.093568&\multirow{6}{*}{\begin{minipage}{1cm}blog-data\end{minipage}} & 0.090640&0.0048244\\
 &0.32510&0.20360& SR(tree)\textcircled{2}&&0.28016&0.076229&&0.088017&0.0045007\\
 &0.81485&0.79896& \textcircled{2}/\textcircled{1}&&0.88238&0.81469&&0.97106&0.93291\\
 &1.0000&0.98324& AR(all)\textcircled{3}&&1.0000&1.0008&&1.0014&0.99963\\
 &1.0115&0.97765& AR(tree)\textcircled{4}&&1.0004&1.0002&&1.0024&0.99899\\
 &1.0115&0.99432& \textcircled{4}/\textcircled{3}&&1.0004&0.99943&&1.0010&0.99936\\
\hline
\multirow{6}{*}{\begin{minipage}{1cm}breast cancer\end{minipage}} & 0.41953&0.28276& SR(all)\textcircled{1}&\multirow{6}{*}{\begin{minipage}{1cm}musk\end{minipage}} & 0.51039&0.24586&\multirow{6}{*}{\begin{minipage}{1cm}Ade-laide\end{minipage}} & 0.017591&0.011090\\
 &0.34647&0.22995& SR(tree)\textcircled{2}&&0.48205&0.21982&&0.016816&0.0094631\\
 &0.82587&0.81324& \textcircled{2}/\textcircled{1}&&0.94446&0.89407&&0.95593&0.85328\\
 &1.0018&0.99817& AR(all)\textcircled{3}&&0.99907&1.0005&&0.99837&0.99622\\
 &1.0000&0.99636& AR(tree)\textcircled{4}&&0.99969&1.0005&&0.99791&0.99397\\
 &0.99818&0.99819& \textcircled{4}/\textcircled{3}&&1.0006&1.0000&&0.99954&0.99774\\
\hline
\multirow{6}{*}{\begin{minipage}{1cm}blood\end{minipage}} & 0.45025&0.010046& SR(all)\textcircled{1}&\multirow{6}{*}{\begin{minipage}{1cm}epi-leptic seizure\end{minipage}} & 0.31092&0.081560&\multirow{6}{*}{\begin{minipage}{1cm}Perth\end{minipage}} & 0.016603&0.010542\\
 &0.39988&0.0098943& SR(tree)\textcircled{2}&&0.29422&0.066807&&0.015871&0.0089568\\
 &0.88812&0.98487& \textcircled{2}/\textcircled{1}&&0.94627&0.81911&&0.95589&0.84961\\
 &1.0000&0.99470& AR(all)\textcircled{3}&&0.99812&0.99532&&0.99815&0.99549\\
 &0.99646&0.99823& AR(tree)\textcircled{4}&&0.99262&0.99381&&0.99757&0.99270\\
 &0.99644&1.0035& \textcircled{4}/\textcircled{3}&&0.99448&0.99848&&0.99943&0.99720\\
\hline
\multirow{6}{*}{\begin{minipage}{1cm}RNA-Seq PANCAN\end{minipage}} & 0.95551&0.93817& SR(all)\textcircled{1}&\multirow{6}{*}{\begin{minipage}{1cm}magic\end{minipage}} & 0.084643&0.028258&\multirow{6}{*}{\begin{minipage}{1cm}Sydney\end{minipage}} & 0.022288&0.014918\\
 &0.94033&0.92062& SR(tree)\textcircled{2}&&0.076688&0.022711&&0.021545&0.013568\\
 &0.98412&0.98130& \textcircled{2}/\textcircled{1}&&0.90602&0.80372&&0.96668&0.90950\\
 &1.0000&1.0000& AR(all)\textcircled{3}&&0.99928&0.99898&&0.99868&0.99663\\
 &1.0000&1.0000& AR(tree)\textcircled{4}&&1.0000&0.99814&&0.99832&0.99491\\
 &1.0000&1.0000& \textcircled{4}/\textcircled{3}&&1.0007&0.99916&&0.99964&0.99827\\
\hline
\multirow{6}{*}{\begin{minipage}{1cm}wine-quality red\end{minipage}} & 0.21869&0.021399& SR(all)\textcircled{1}&\multirow{6}{*}{\begin{minipage}{1cm}hep-mass\end{minipage}} & 0.010160&0.0019280&\multirow{6}{*}{\begin{minipage}{1cm}Tas-mania\end{minipage}} & 0.016625&0.010852\\
 &0.20994&0.019813& SR(tree)\textcircled{2}&&0.0098652&0.0015365&&0.015828&0.0092964\\
 &0.96000&0.92592& \textcircled{2}/\textcircled{1}&&0.97095&0.79695&&0.95209&0.85666\\
 &1.0027&0.99370& AR(all)\textcircled{3}&&0.99996&0.99984&&0.99765&0.99410\\
 &1.0090&0.98832& AR(tree)\textcircled{4}&&0.99991&0.99961&&0.99692&0.99030\\
 &1.0063&0.99459& \textcircled{4}/\textcircled{3}&&0.99996&0.99977&&0.99927&0.99618\\
\hline
\multirow{6}{*}{\begin{minipage}{1cm}wine-quality white\end{minipage}} & 0.19249&0.010184& SR(all)\textcircled{1}&\multirow{6}{*}{\begin{minipage}{1cm}Real estate\end{minipage}} & 0.12487&0.029002&\multirow{6}{*}{\begin{minipage}{1cm}Year Predict MSD\end{minipage}} & 0.010382&0.0044004\\
 &0.18503&0.0094007& SR(tree)\textcircled{2}&&0.11683&0.025709&&0.010117&0.0035825\\
 &0.96123&0.92307& \textcircled{2}/\textcircled{1}&&0.93565&0.88645&&0.97448&0.81413\\
 &0.99669&1.0000& AR(all)\textcircled{3}&&1.0029&1.0028&&0.99734&0.99282\\
 &1.0018&1.0003& AR(tree)\textcircled{4}&&1.0094&1.0043&&0.99547&0.98828\\
 &1.0051&1.0003& \textcircled{4}/\textcircled{3}&&1.0065&1.0015&&0.99813&0.99542\\
\hline
\end{tabular}
\end{center}
}
\end{table}

\begin{figure}[tb]
  \includegraphics[width=0.5\linewidth]{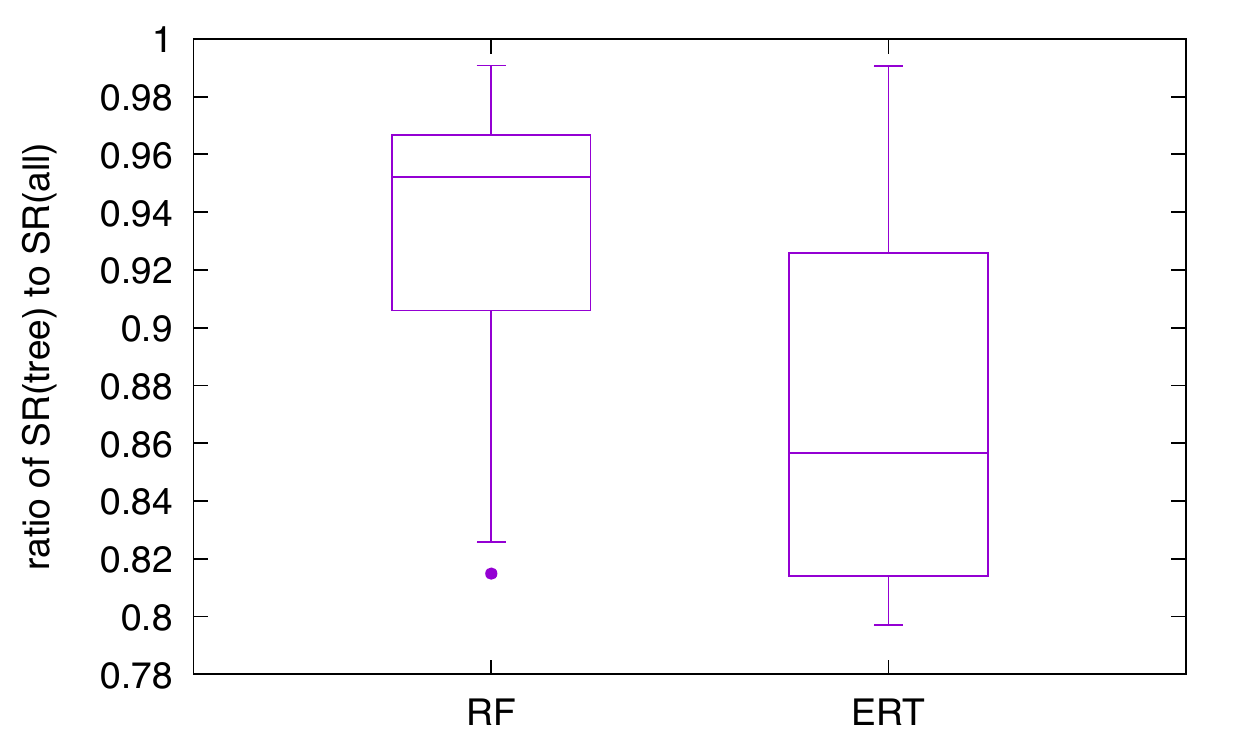}\includegraphics[width=0.5\linewidth]{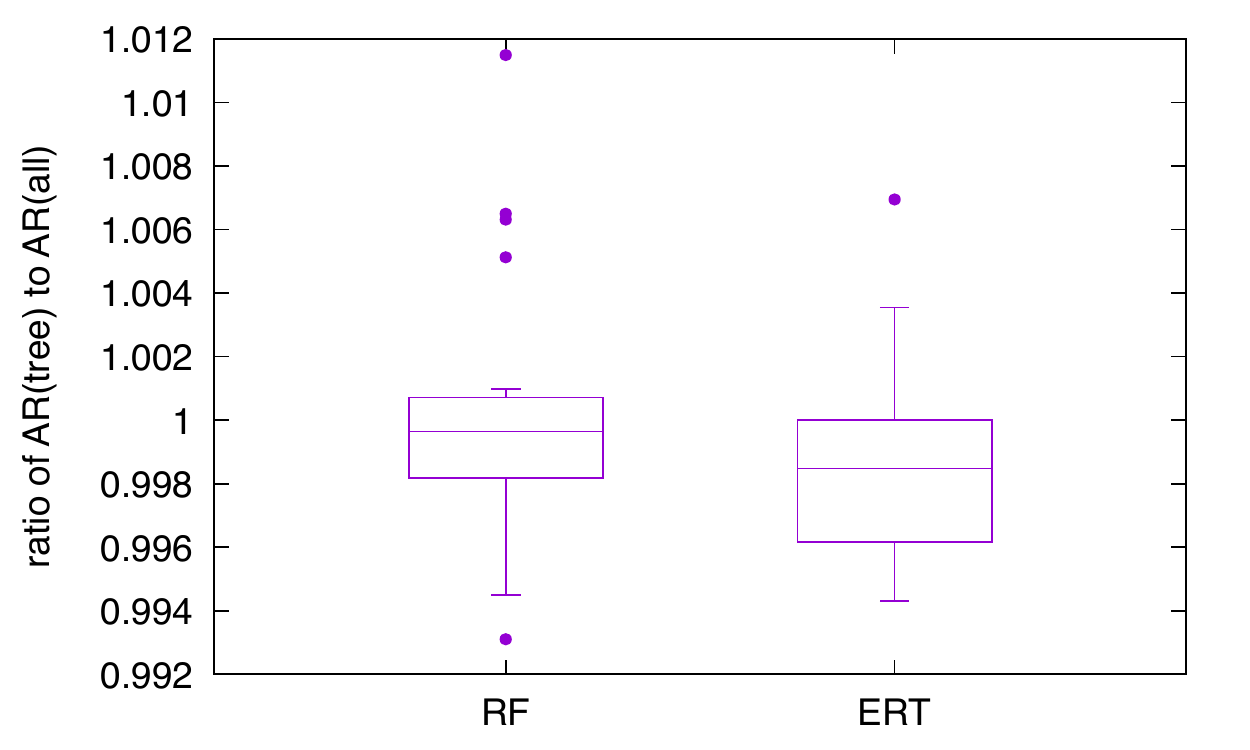}
  \caption{The box plots of ratios of SR(tree) to SR(all) and AR(tree) to AR(all)
    for classifiers/regressors learned from the 21 datasets by RF and ERT learners.
    In the figures, the values outside 1.5 times the interquartile range from the first and third quartile boundaries are plotted as outliers.}\label{fig:bootstrap-sample}
\end{figure}

Next, we check the effect of using bootstrap training samples per tree as feature vectors for the tree instead of using all the training data as feature vectors for all the trees. 
The bootstrap training samples per tree can be referred through scikit-learn by building from a little modified source.

The result is shown in Table~\ref{tbl:bootstrap-sample}.
In the table, SR(all) and AR(all) are the same as SR and AR in Table~\ref{tbl:perf-four-types},
and SR(tree) and AR(tree) are SR and AR for the case using each tree's bootstrap training samples as the feature vectors given to the tree. The ratio of SR(tree) to SR(all) and the ratio of AR(tree) to AR(all) are also shown in the third and sixth rows, respectively, in the rows for each dataset.
Figure~\ref{fig:bootstrap-sample} shows the distributions of those ratios over ensemble classifiers/regressors learned from the 21 datasets by RF and ERT learners.
The medians of such relative size ratios for RFs and ERTs are 95.2\% and 85.7\%, respectively, while also keeping the relative accuracy degradation within 1\%.

\subsubsection{Effectiveness of further sharing}

\begin{figure}[tb]
  \includegraphics[width=0.33\linewidth]{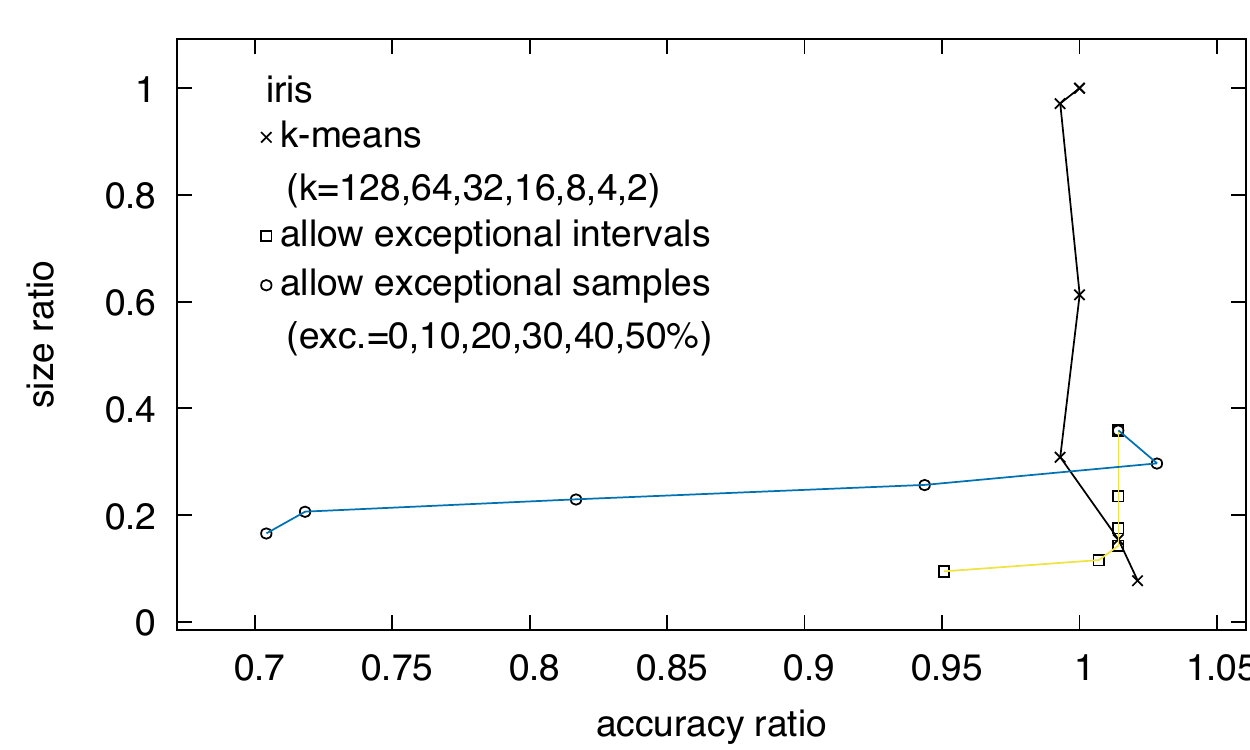}\includegraphics[width=0.33\linewidth]{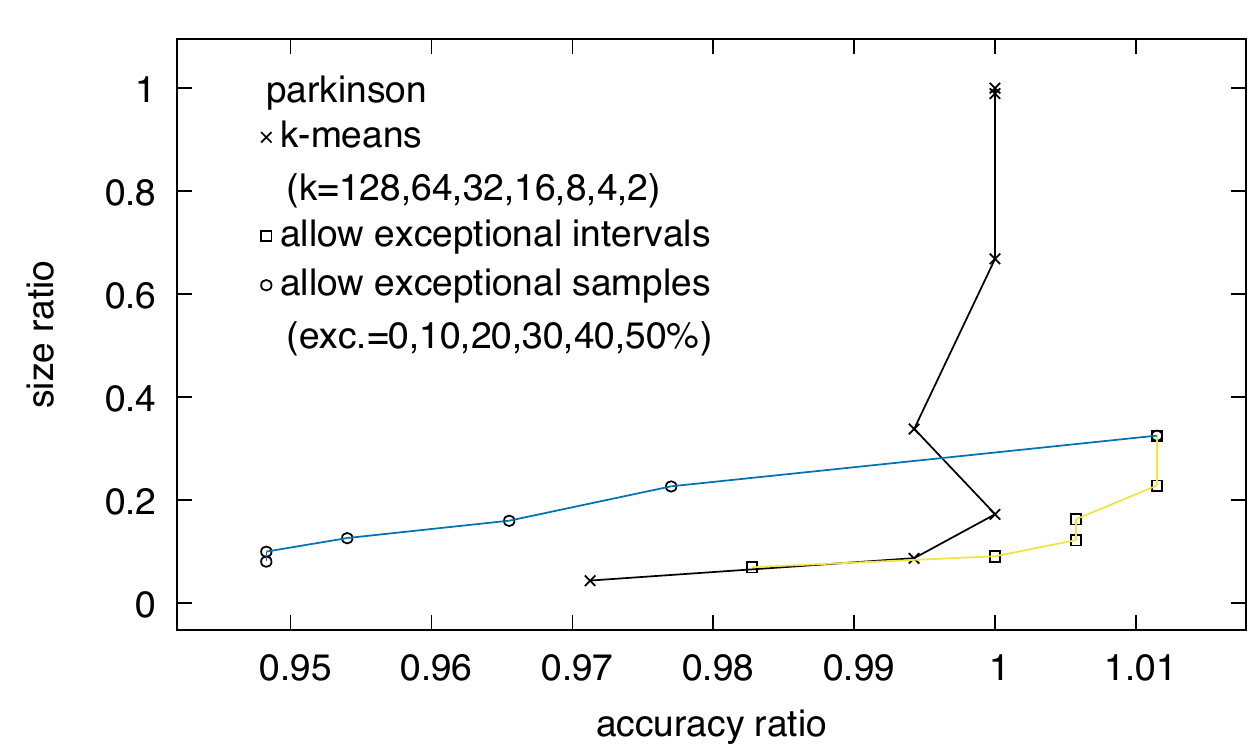}\includegraphics[width=0.33\linewidth]{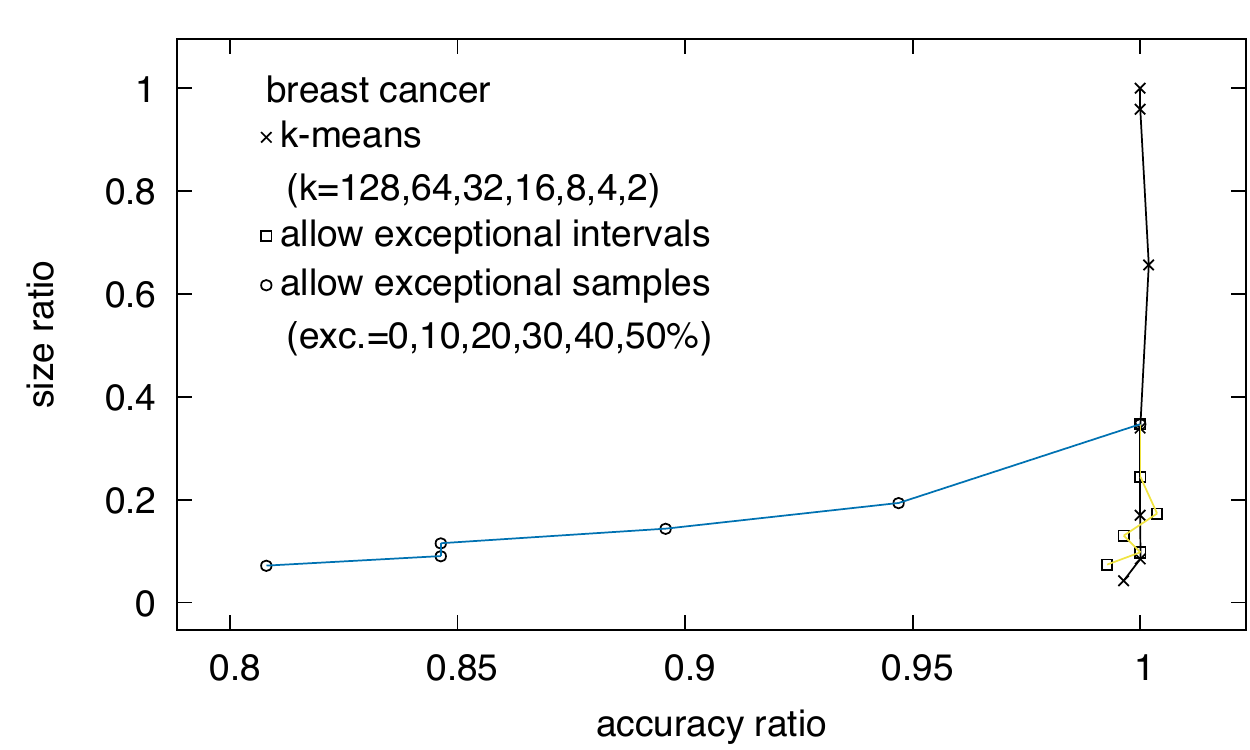}
  \includegraphics[width=0.33\linewidth]{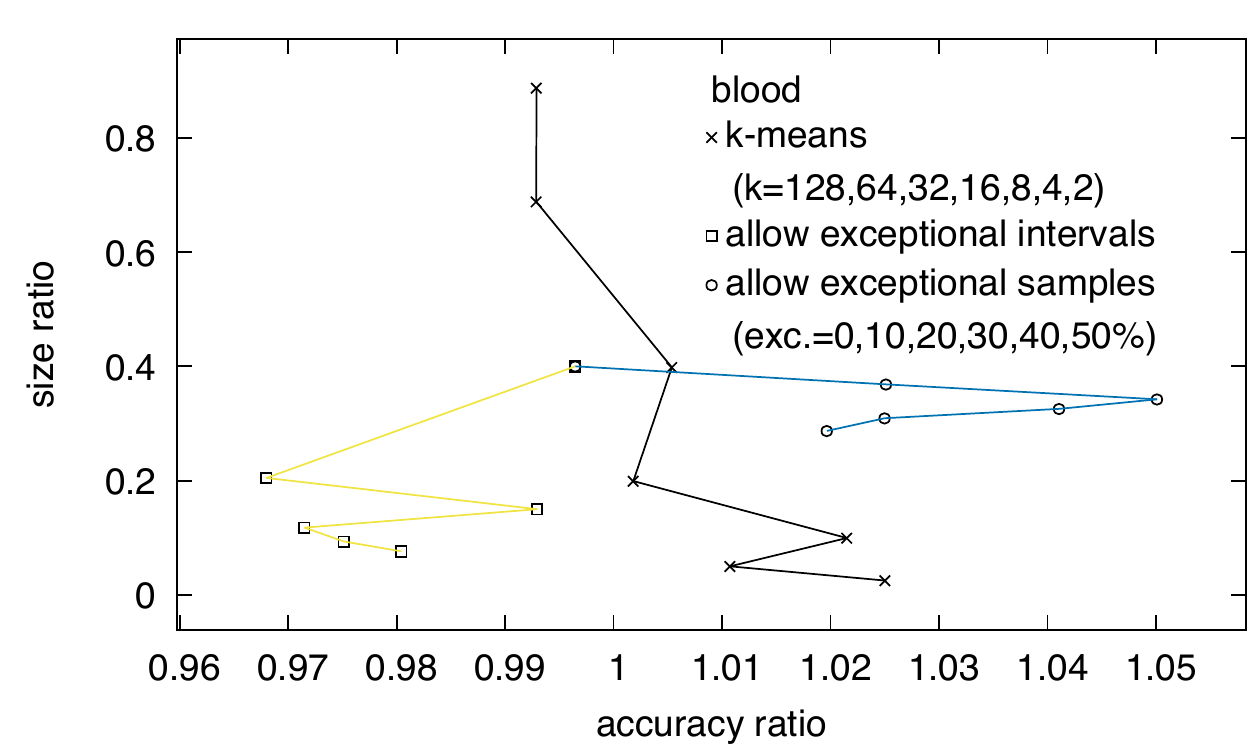}\includegraphics[width=0.33\linewidth]{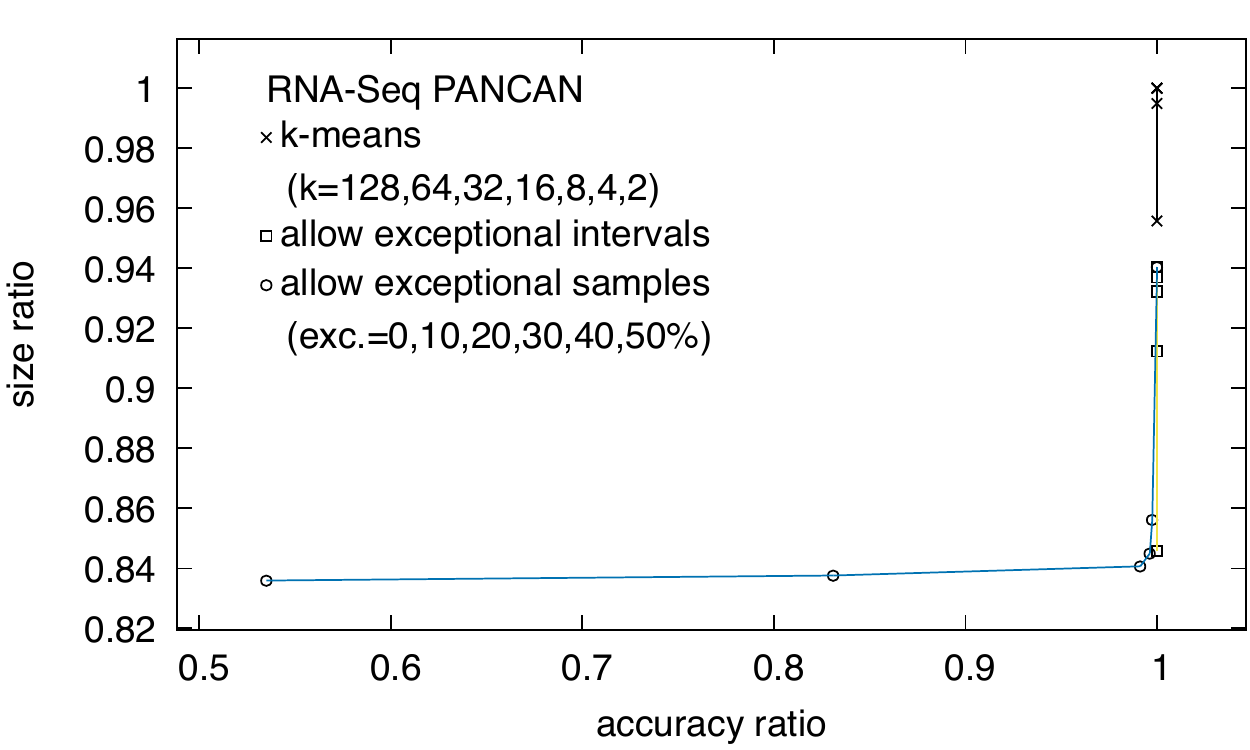}\includegraphics[width=0.33\linewidth]{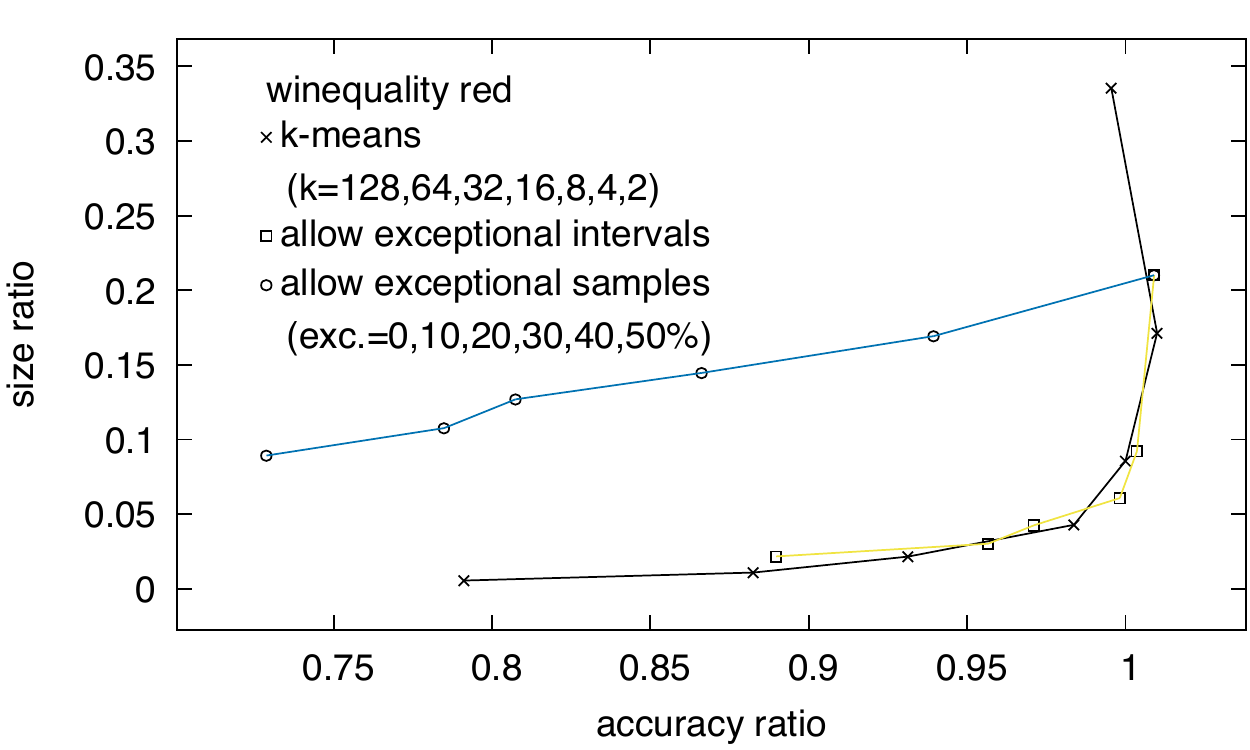}
  \includegraphics[width=0.33\linewidth]{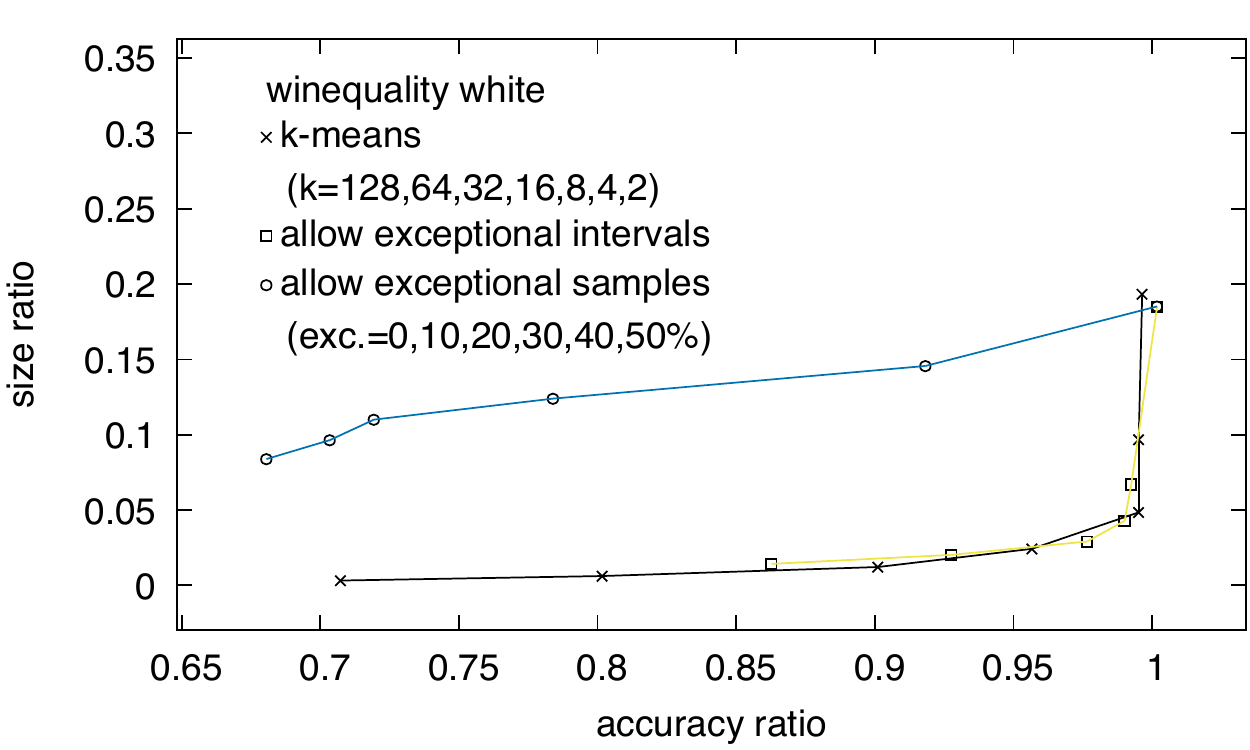}\includegraphics[width=0.33\linewidth]{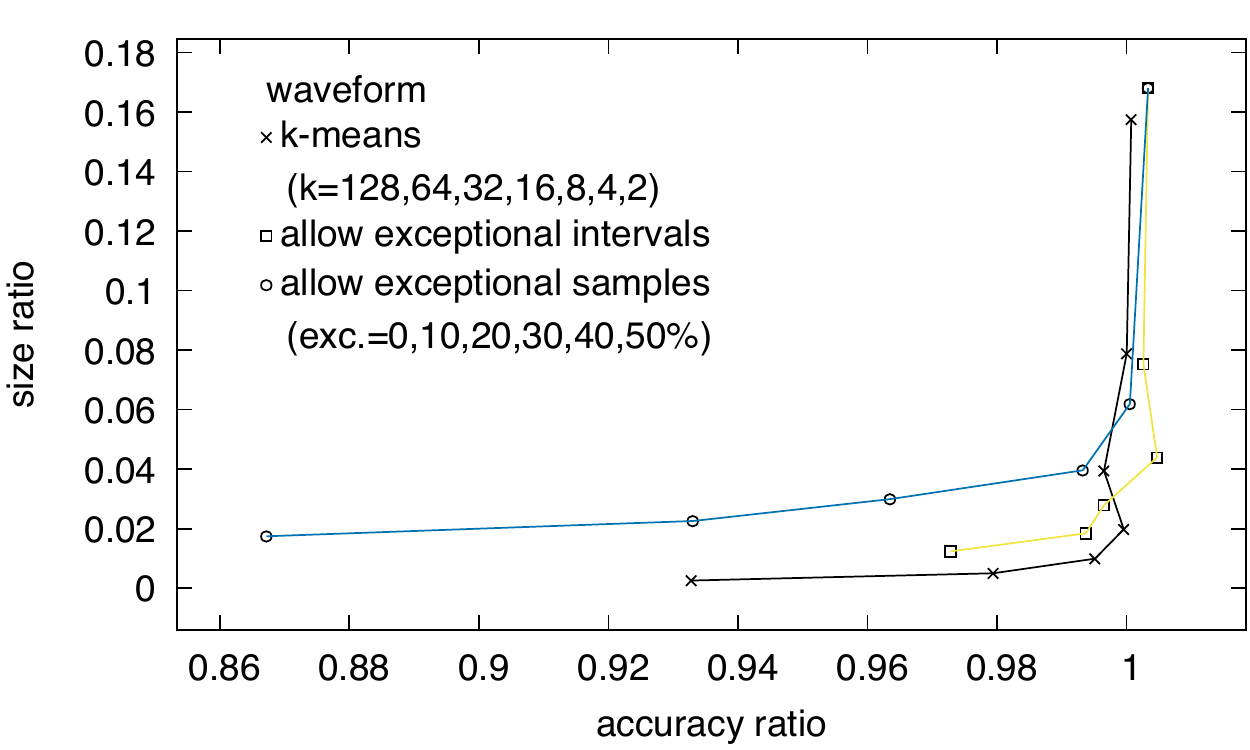}\includegraphics[width=0.33\linewidth]{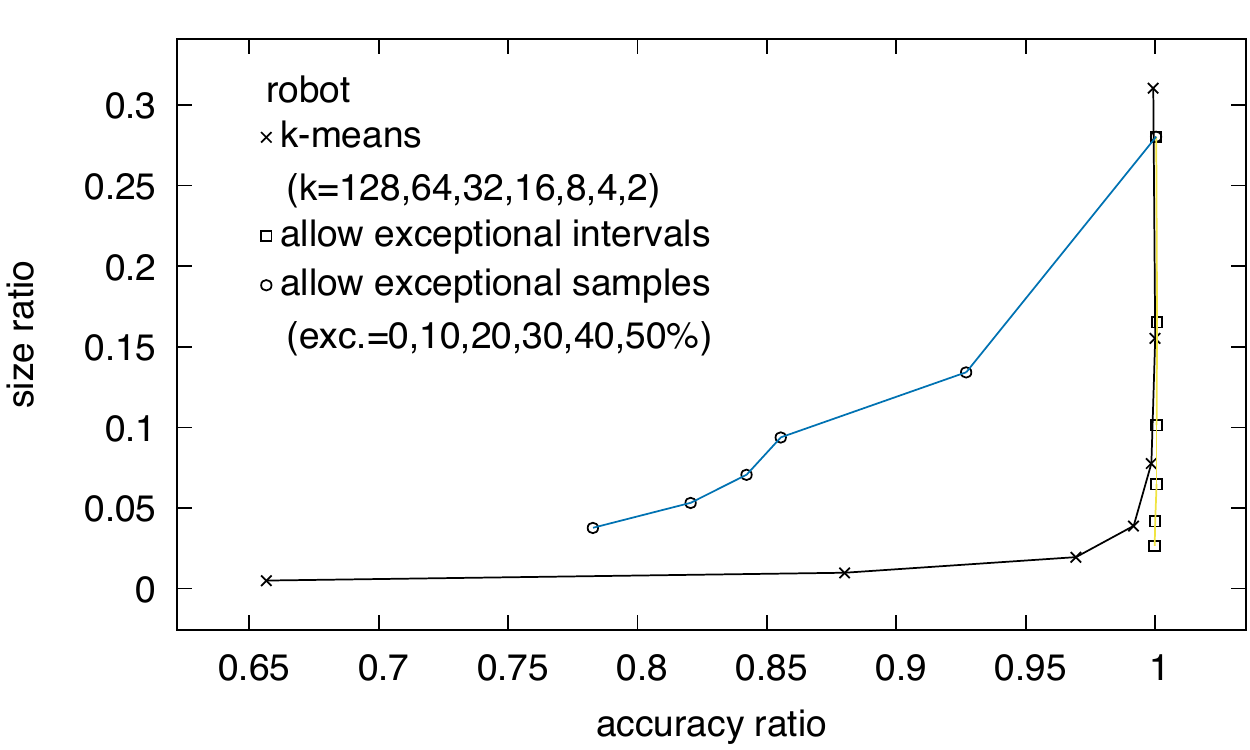}
  \includegraphics[width=0.33\linewidth]{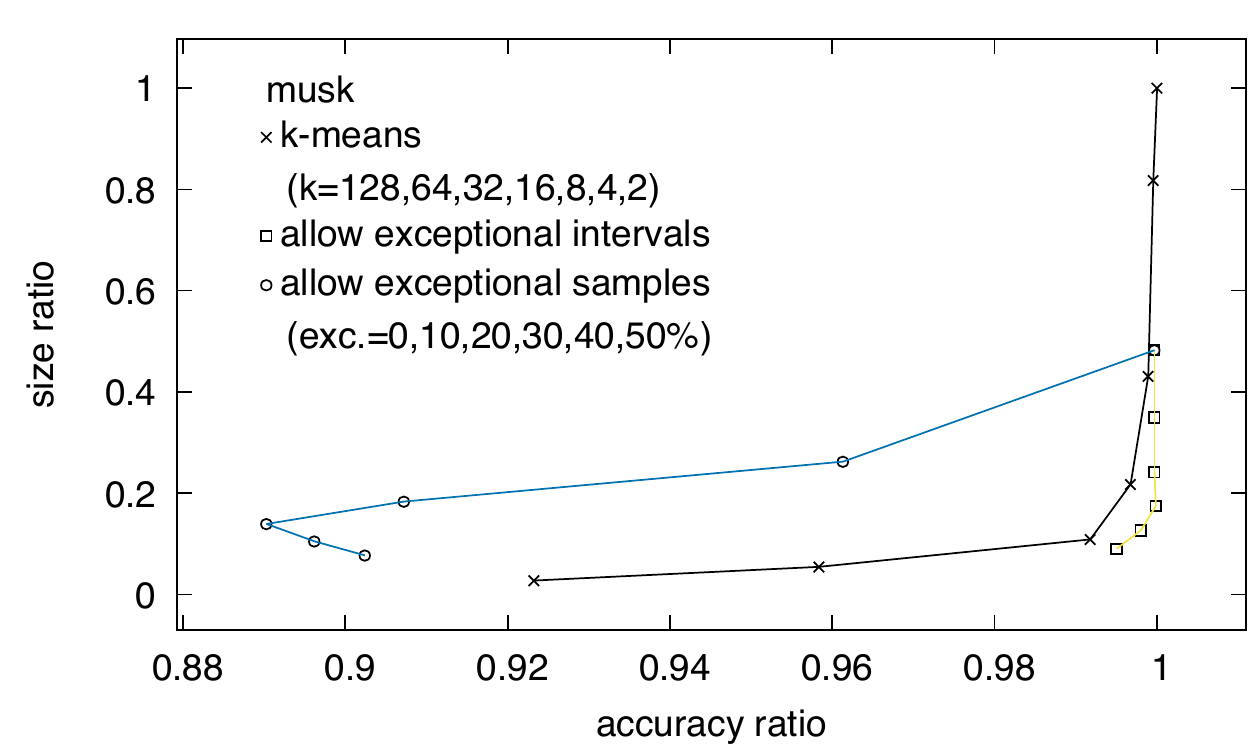}\includegraphics[width=0.33\linewidth]{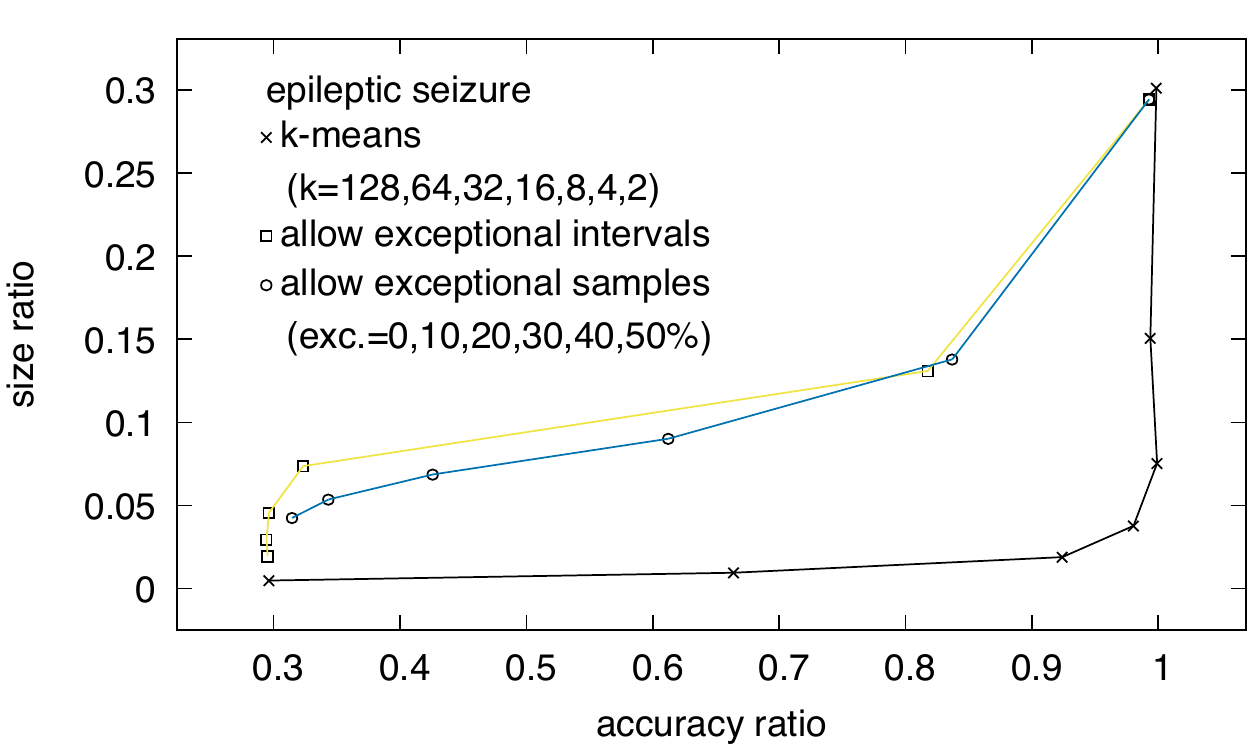}\includegraphics[width=0.33\linewidth]{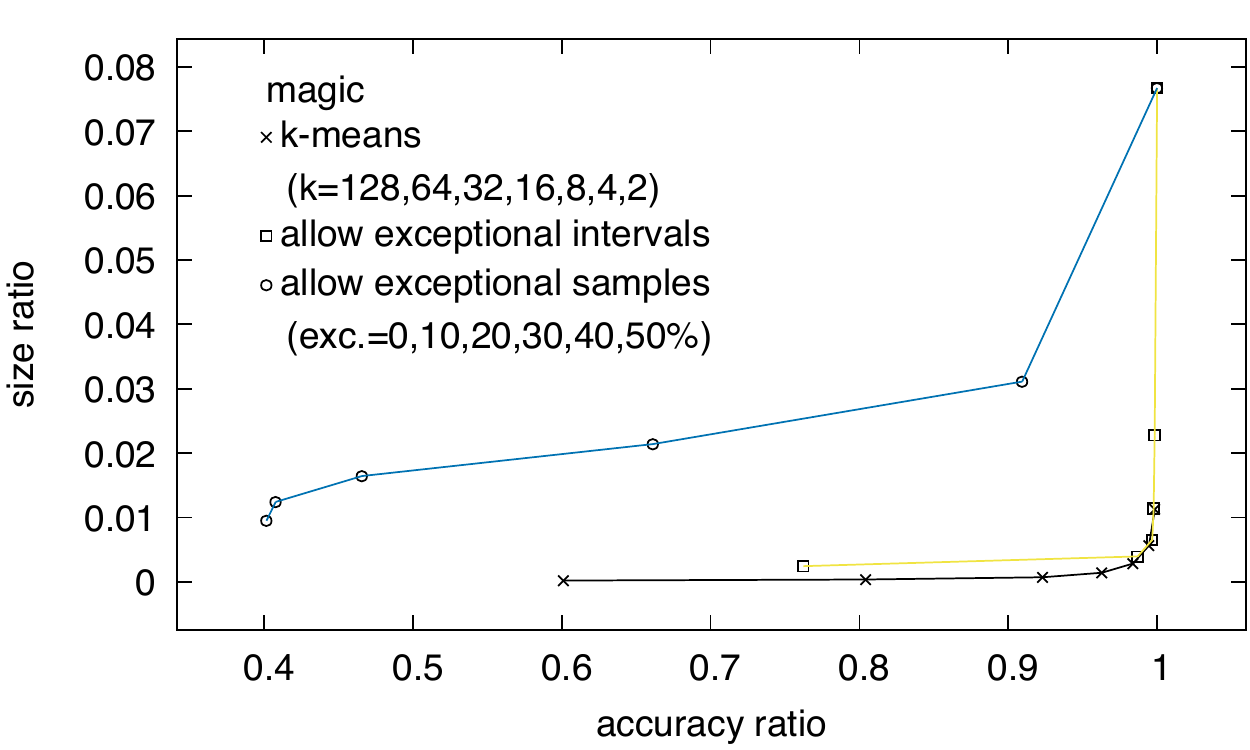}
  \includegraphics[width=0.33\linewidth]{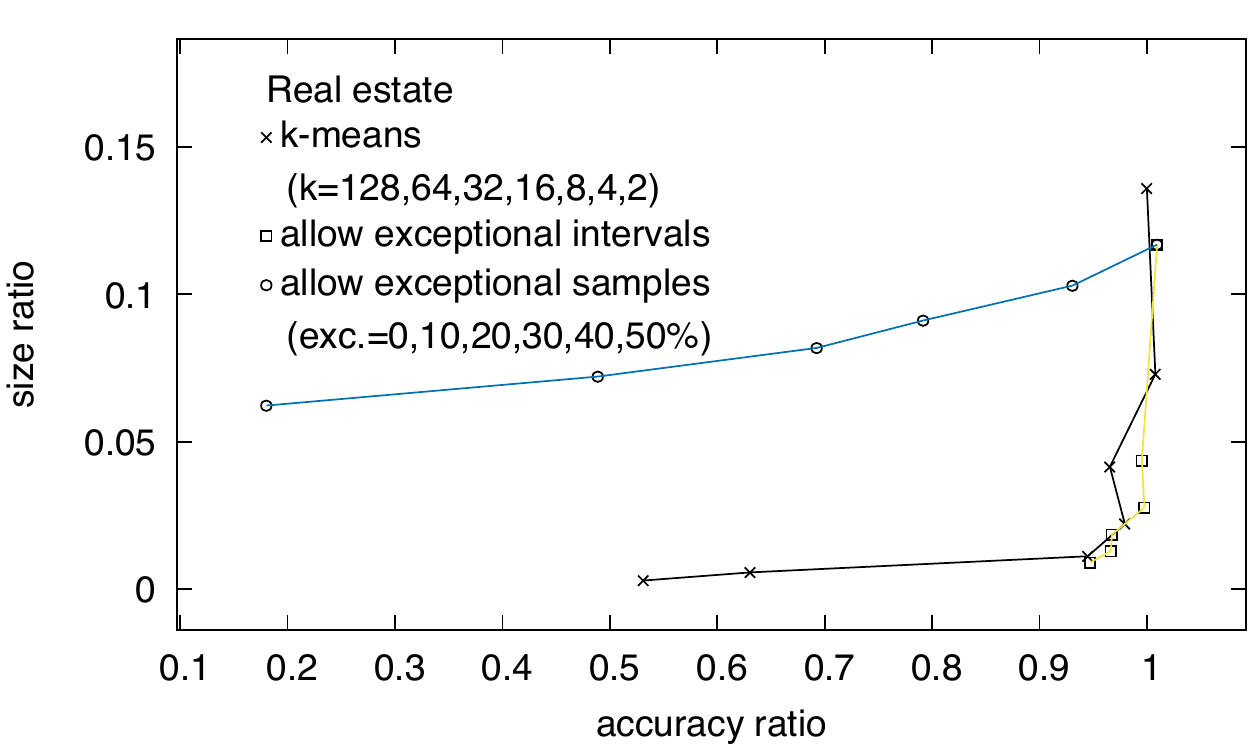}\includegraphics[width=0.33\linewidth]{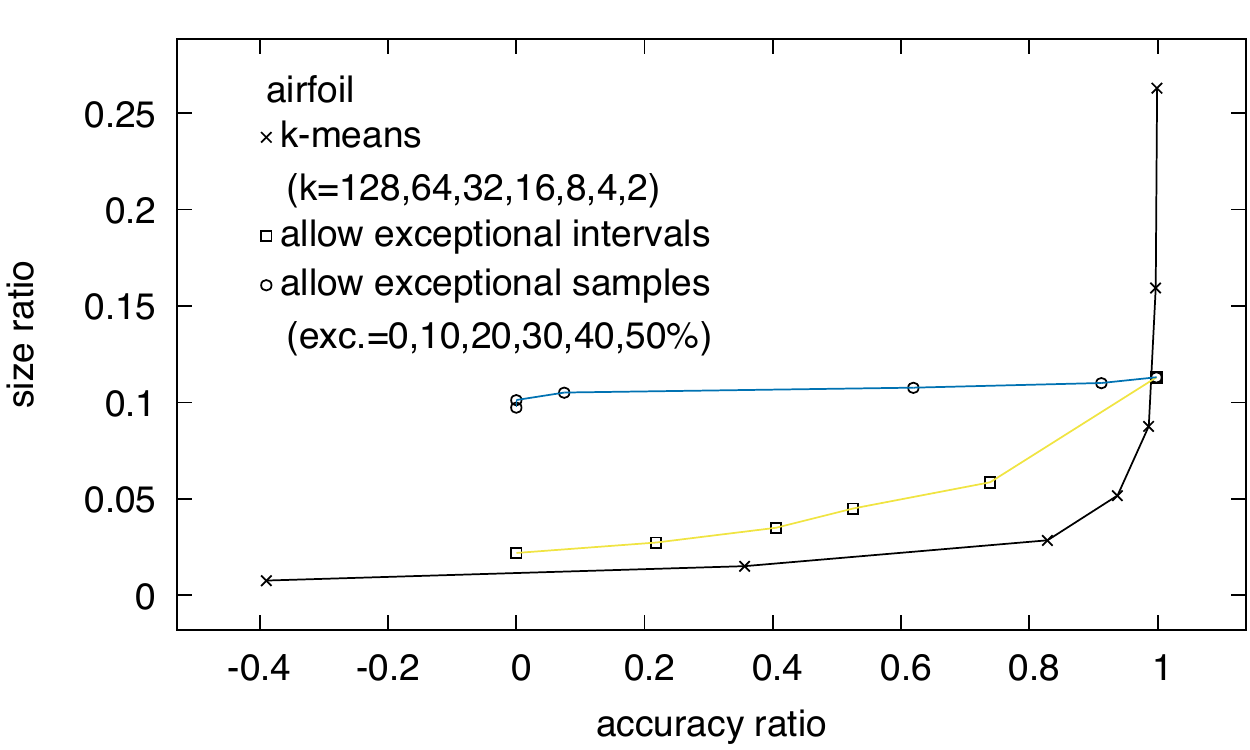}
  \caption{(accuracy ratio, size ratio)-relation line graphs for two extended Min\_DBN and the clustering-based method using k-means \citep{JSN2018}.}\label{fig:comp-kmeans}
\end{figure}

\begin{figure}[tb]
  \includegraphics[width=0.33\linewidth]{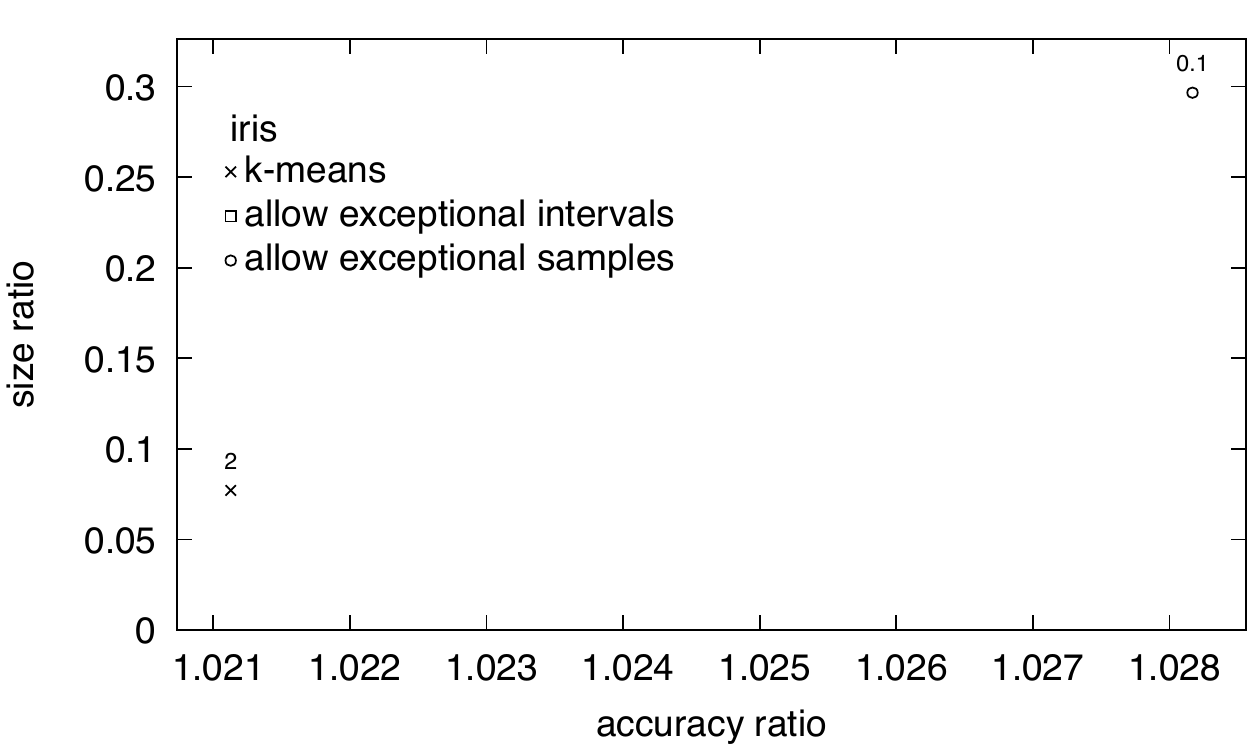}\includegraphics[width=0.33\linewidth]{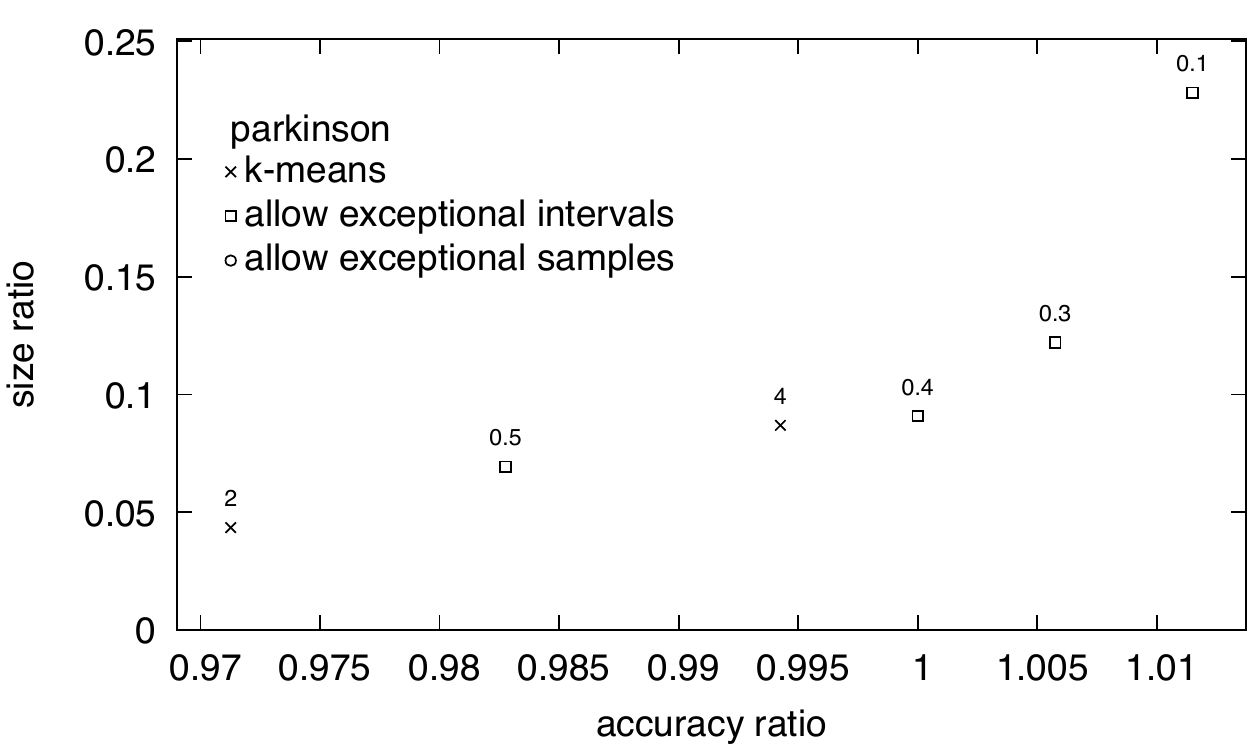}\includegraphics[width=0.33\linewidth]{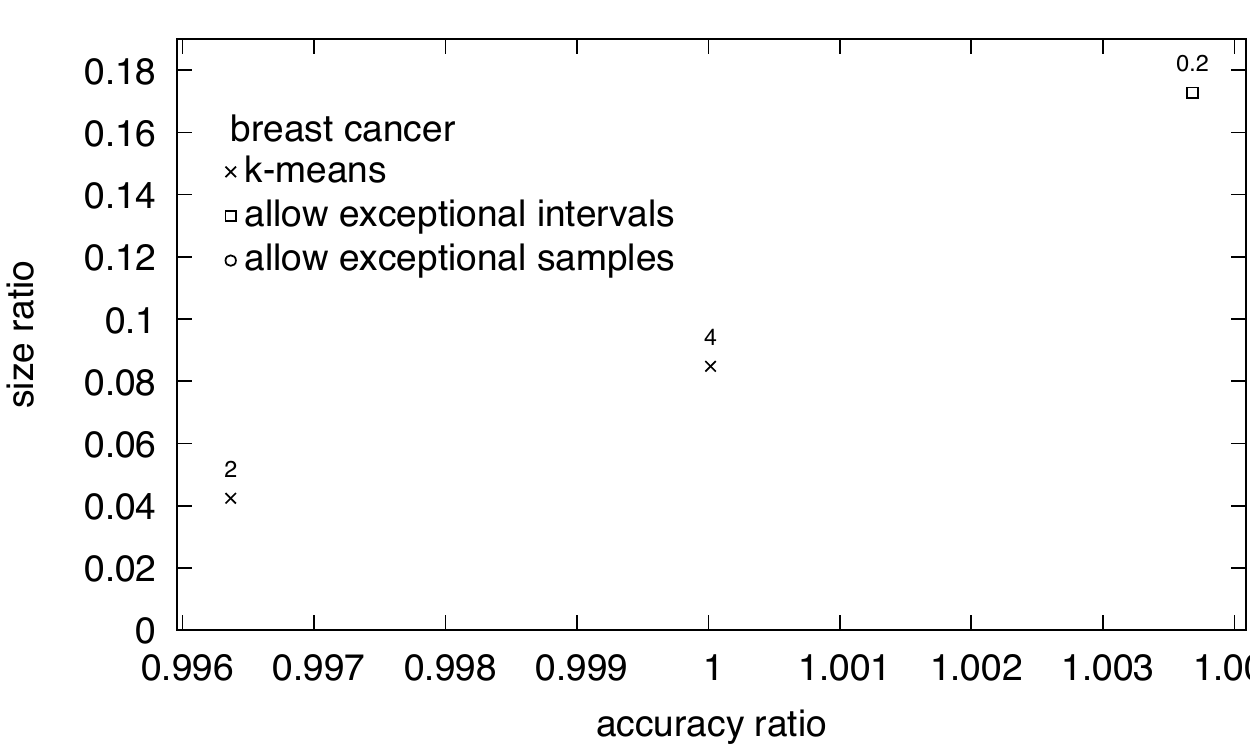}
  \includegraphics[width=0.33\linewidth]{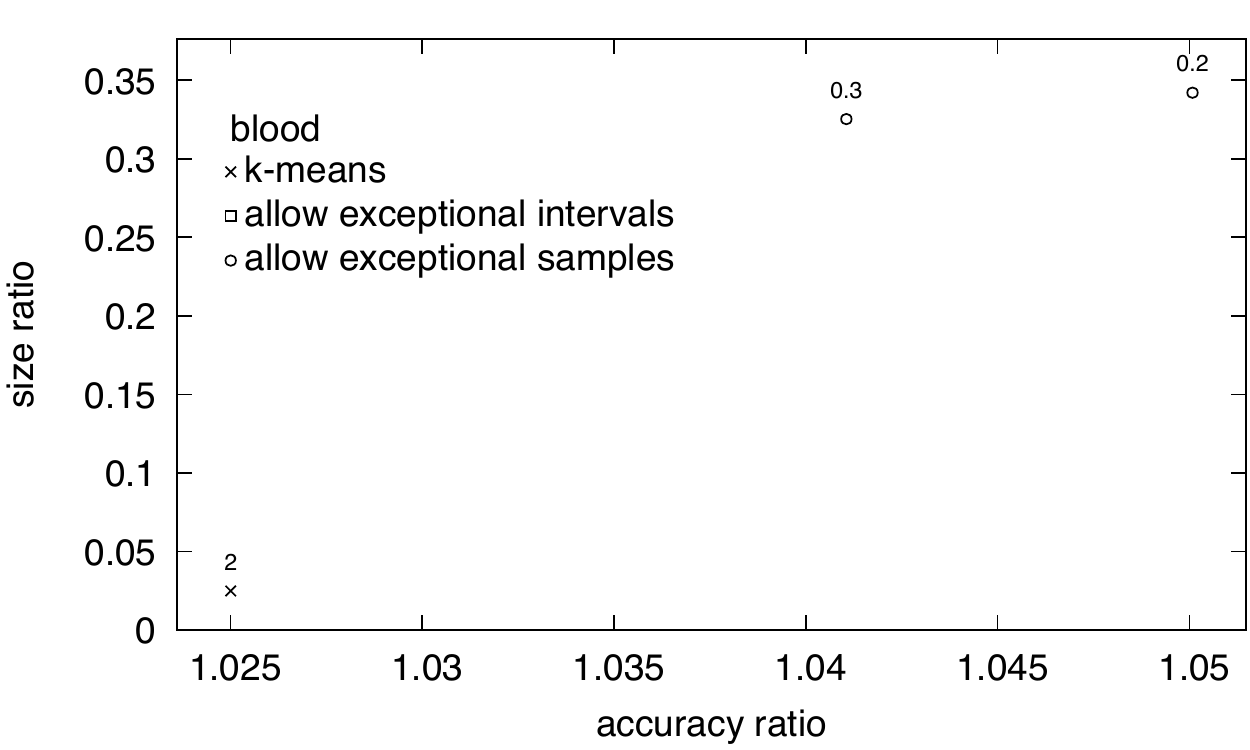}\includegraphics[width=0.33\linewidth]{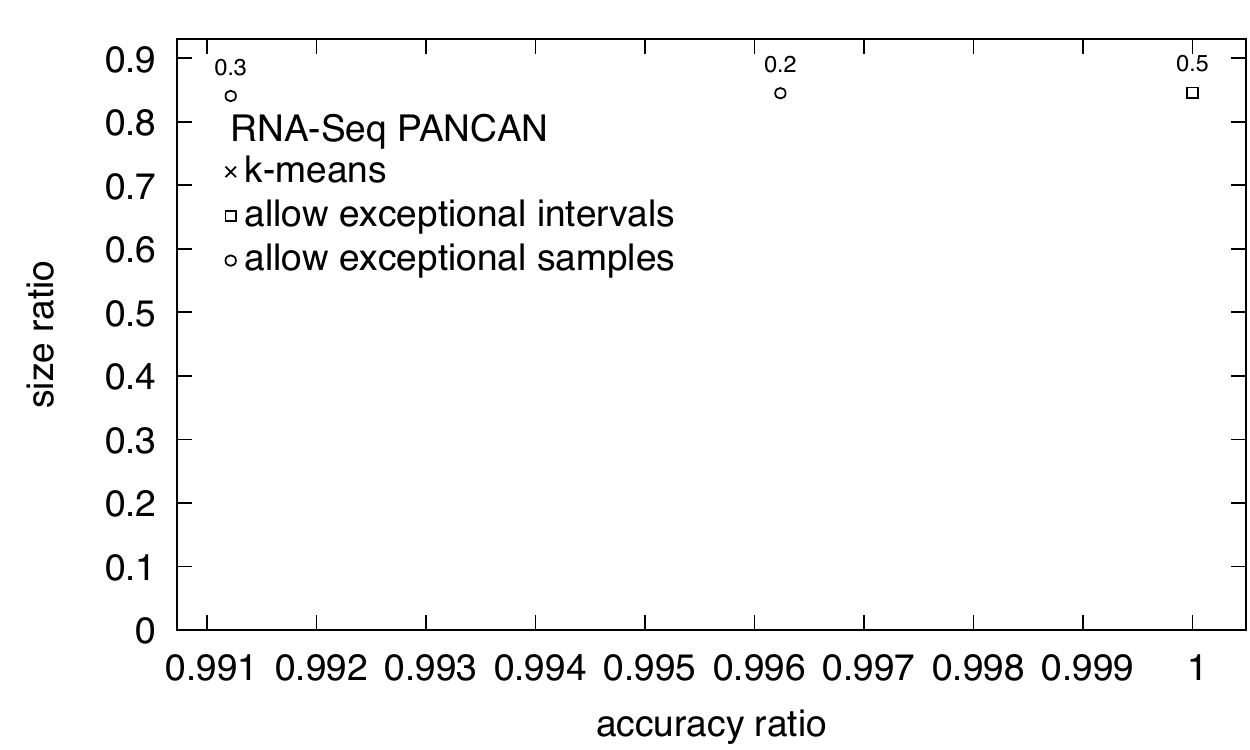}\includegraphics[width=0.33\linewidth]{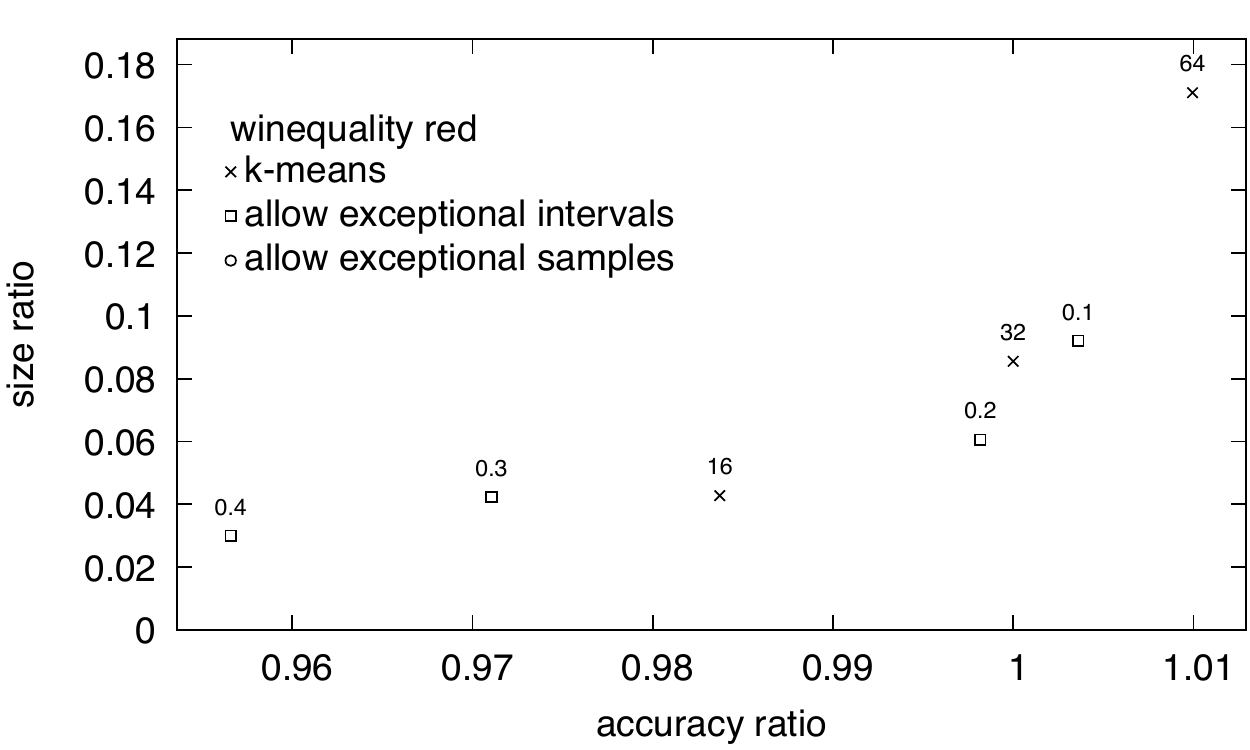}
  \includegraphics[width=0.33\linewidth]{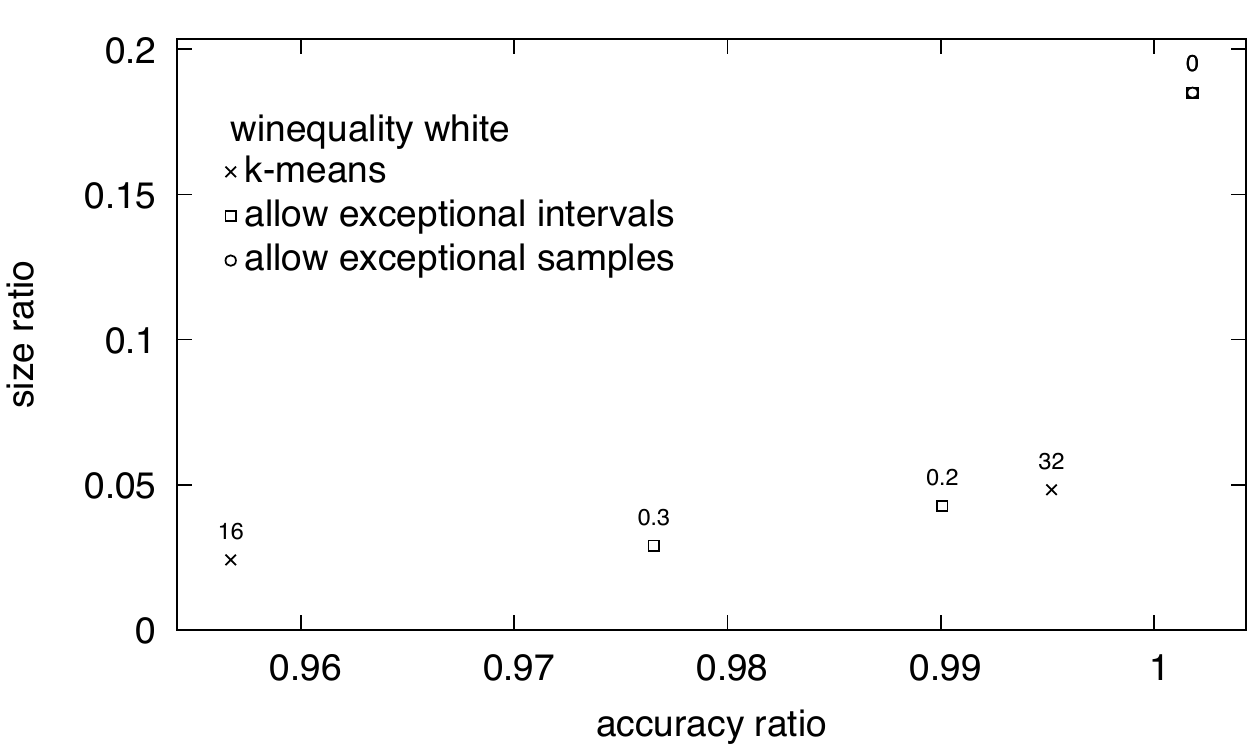}\includegraphics[width=0.33\linewidth]{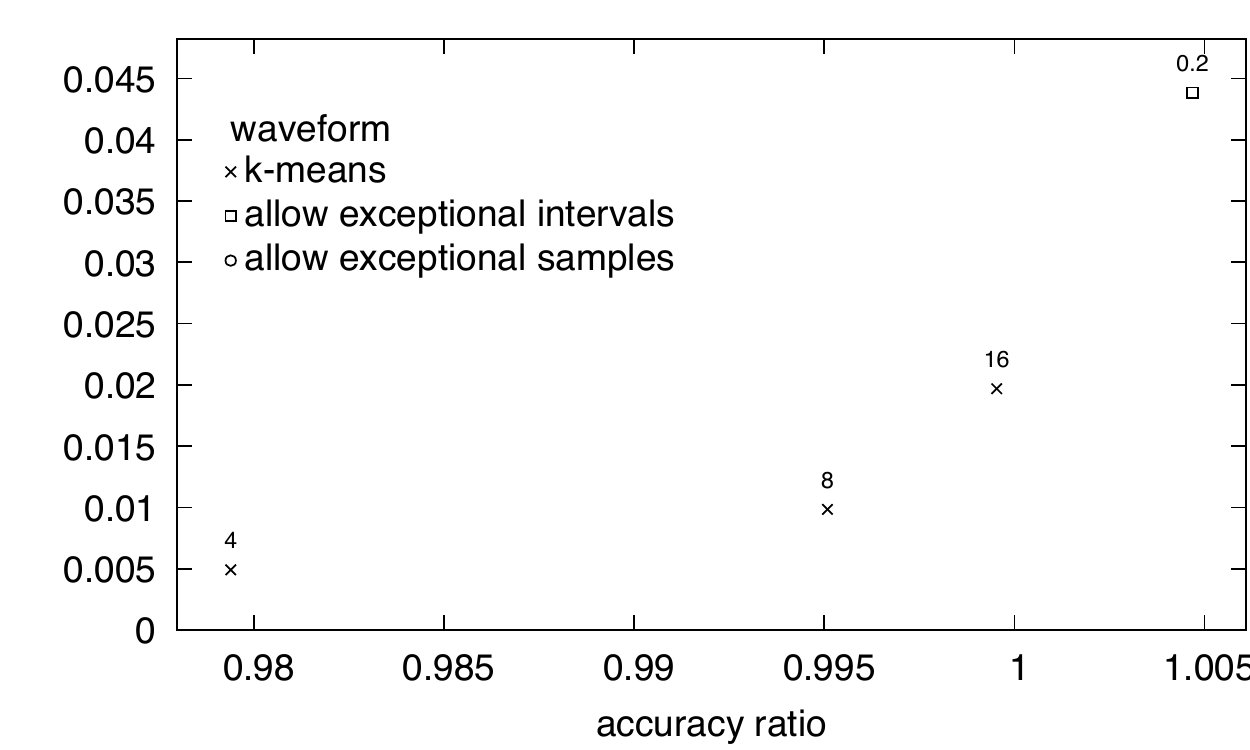}\includegraphics[width=0.33\linewidth]{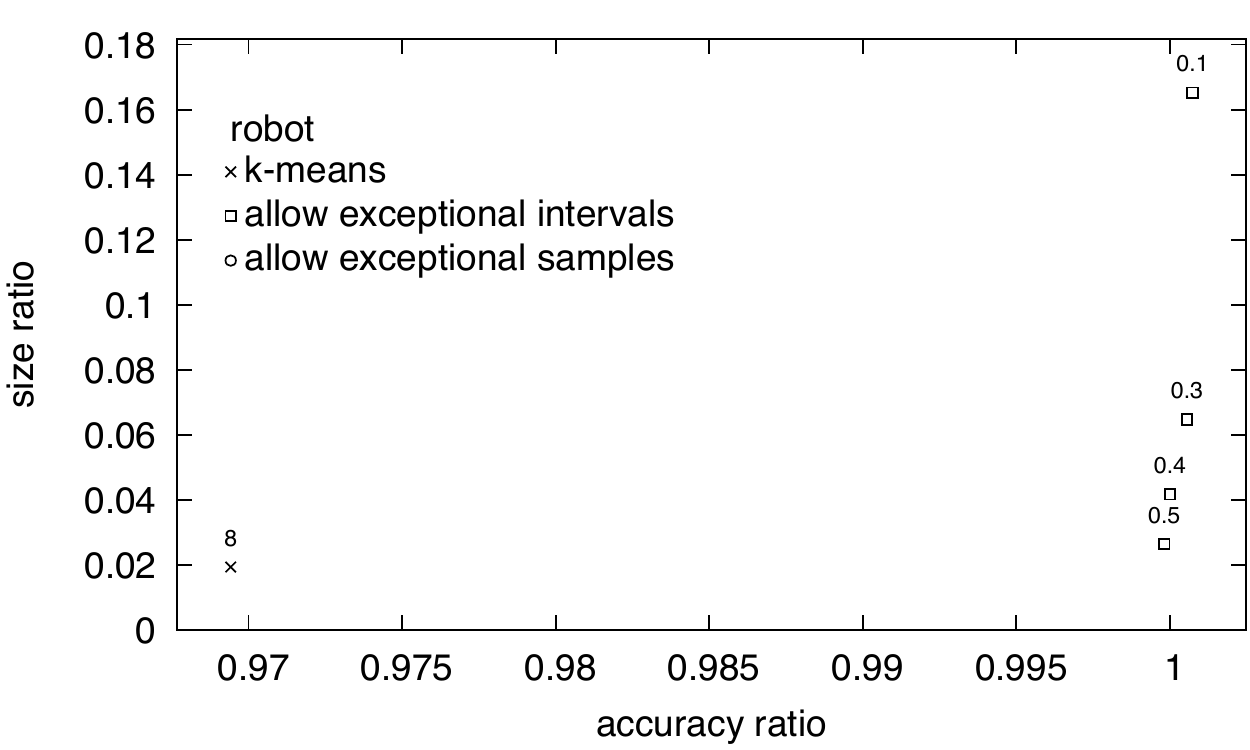}
  \includegraphics[width=0.33\linewidth]{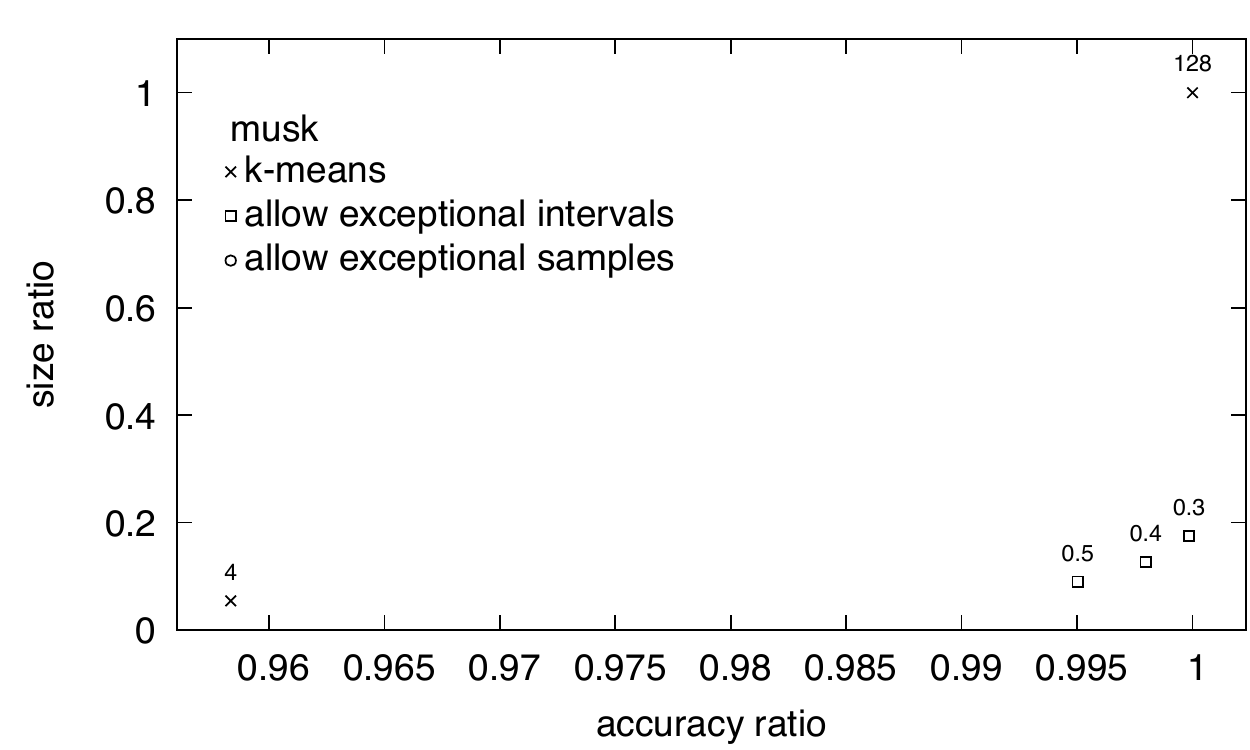}\includegraphics[width=0.33\linewidth]{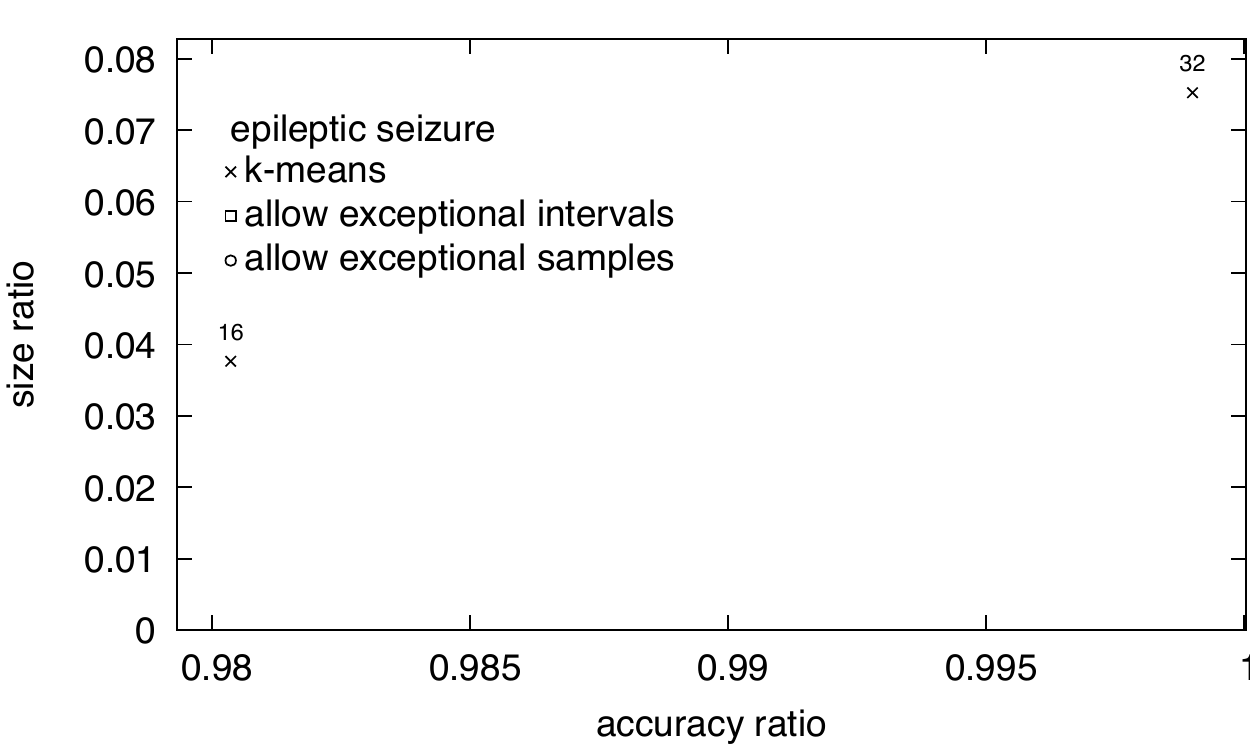}\includegraphics[width=0.33\linewidth]{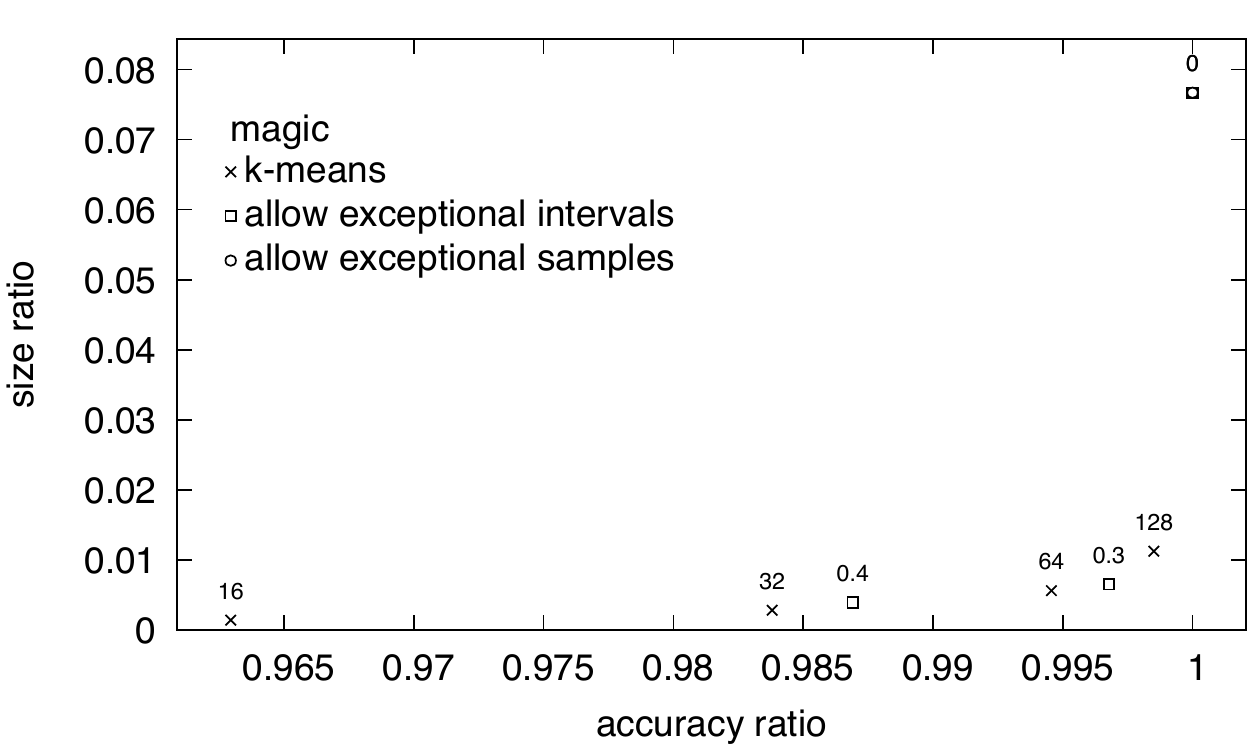}
  \includegraphics[width=0.33\linewidth]{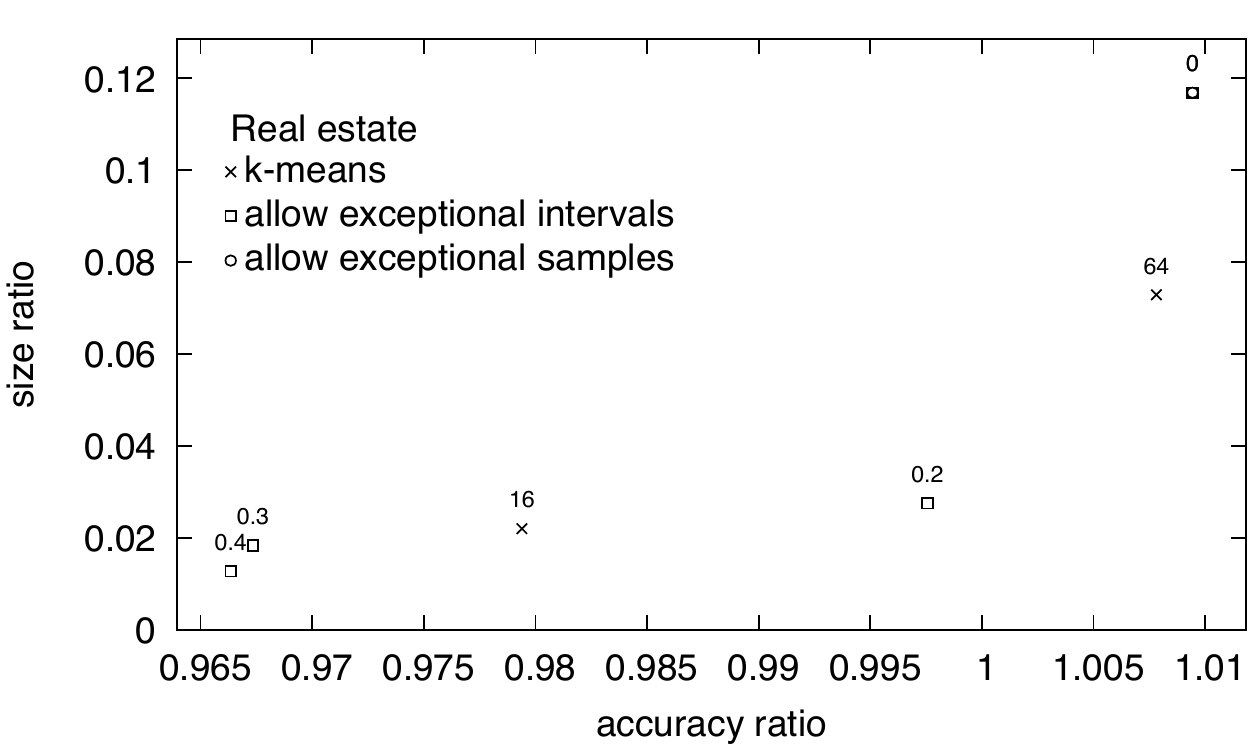}\includegraphics[width=0.33\linewidth]{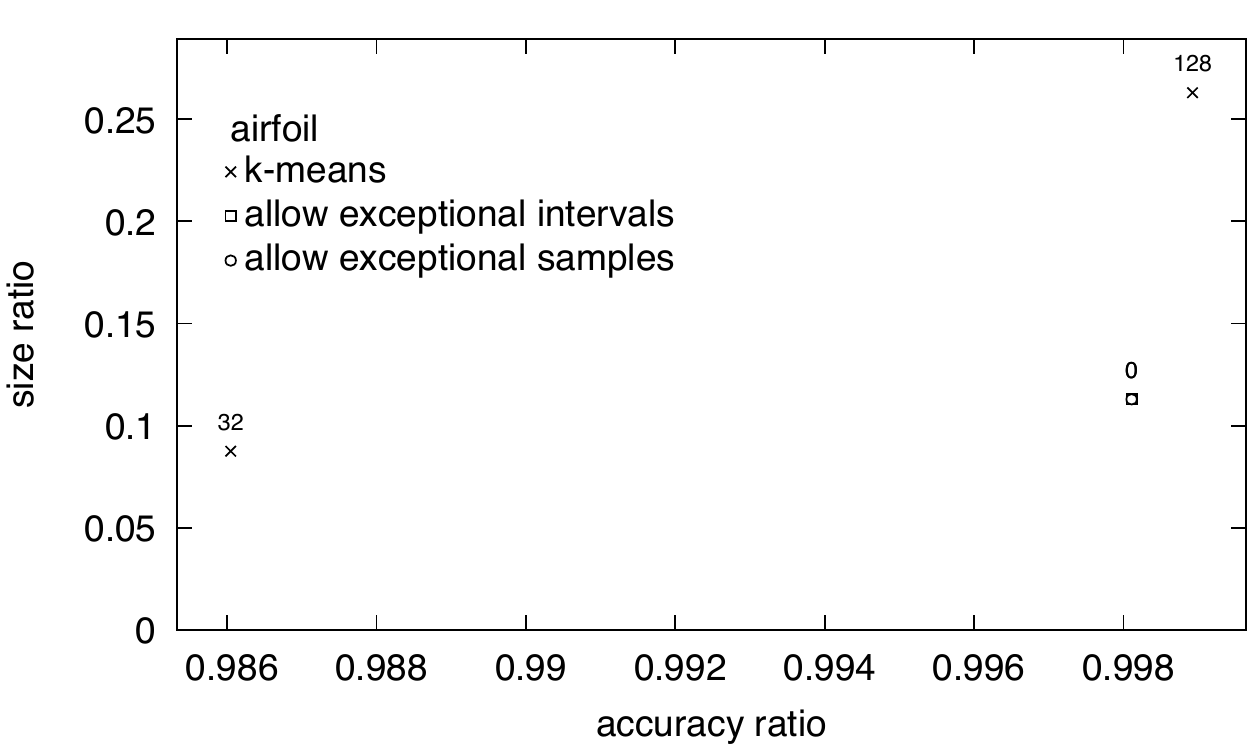}
  \caption{Pareto optimal predictors with accuracy ratio more than $0.95$ among the 18 predictors (the clustering-based method using k-means \citep{JSN2018} with $k=2,4,8,16,32,64,128$, modified Min\_DBN with path-changeable-rate $\sigma=0.1,0.2,0.3,0.4,0.5$, another modified Min\_DBN using Min\_IntSet\_wExc that allows $c=0.1,0.2,0.3,0.4,0.5\times$\#(intervals for each feature), and original Min\_DBN($\sigma=0$ or $c=0$)).}\label{fig:po-comp-kmeans}
\end{figure}

We conduct experiments to check the effectiveness of two extended versions of our branching condition sharing method.
One is an extension in which the decision paths of given feature vectors are allowed to change within a given path-changeable rate $\sigma$ at each branching node, that is, an extension \textbf{allowing exceptional samples}.
The other is an extension in which $c$ branching condition thresholds are allowed to be out of the path-invariant range
for samples passing through the node, that is, an extension \textbf{allowing exceptional intervals}.
For the 14 datasets that contain at most 20,000 instances,
we run two extended versions of Min\_DBN with parameters $100\sigma, 100c/p=0,10,20,30,40,50$,
where $p$ is the number of intervals for each feature, which depends on a dataset and a feature.
We also execute the existing clustering-based method \citep{JSN2018} in which the centers of $k$-means clustering
over the thresholds of branching conditions using the same feature are selected as the thresholds of shared branching conditions. In the experiment, parameter $k$ of the $k$-means clustering-based method is set to $2,4,8,16,32,64$, and $128$.

Figure~\ref{fig:comp-kmeans} shows the result of the experiments.
Note that the size reduces as exceptions increase for our two methods, and as $k$ decreases for $k$-means clustering-based method.
Comparing our two methods, the method allowing exceptional intervals performs better than the method allowing exceptional samples excluding two datasets (blood and epileptic seizure). In most datasets, accuracy degradation by allowing more sample exceptions is larger than that by allowing more interval exceptions though the latter version of Min\_DBN needs more memory and computational time than the former version.
As for the comparison to $k$-means clustering-based method, the method allowing exceptional interval method performs better for some datasets (parkinson, RNA-Seq PANCAN, robot, musk) but worse for some other datasets (blood, epileptic seizure, airfoil), and similar for the rest 7 datasets. Figure~\ref{fig:po-comp-kmeans} shows pareto optimal predictors among all the predictors with accuracy ratio more than 0.95 for each dataset, that is, predictors that have no predictors better than them in both accuracy and size ratio.
The method allowing interval exceptions finds more high accuracy-ratio pareto optimal predictors than the $k$-means clustering-based method.

\section{Conclusions}

We formalized novel simplification problems of a tree ensemble classifier or regressor, proposed algorithms for the problems,
and demonstrated their effectiveness for four major tree ensembles using 21 datasets in the UCI machine learning repository.
Our algorithm Min\_DBN can make a given tree ensemble share its branching conditions optimally under the constraint that
the decision path for any given feature vector does not change, which ensures accuracy preservation.
Simple clustering of branching condition thresholds is not easy to find the minimum number of cluster centers that surely keeps accuracy, though it more successfully shares branching conditions with high accuracy for some datasets. 
Loosening the constraint promotes our method more sharing of branching conditions and achieves a smaller size ratio of the number of distinct branching conditions without accuracy degradation for some datasets.

\section*{Acknowledgments}

We would like to thank Assoc. Prof. Ichigaku Takigawa, Assoc. Prof. Shinya Takamaeda-Yamazaki, Prof. Hiroki Arimura, and Prof. Masato Motomura for helpful comments to improve this research. 
This work was supported by JST CREST Grant Number JPMJCR18K3, Japan.

\vskip 0.2in
\bibliography{arXiv2022}

\end{document}